\newtheorem{thm}{Theorem}
\newtheorem{cor}{Corollary}
\newtheorem{lem}{Lemma}
\newcommand{\RomanNumeralCaps}[1]
{\MakeUppercase{\romannumeral #1}}
        \author{Shlomi Vituri \\ School of Electrical Engineering \\Tel Aviv University \\Email: shlomivituri@mail.tau.ac.il 
        \and
        Meir Feder \\ School of Electrical Engineering \\Tel Aviv University \\Email: meir@tau.ac.il}
\begin{document}

	\title{Universal Batch Learning Under The Misspecification Setting}

	\maketitle

        %\author[1]{Shlomi Vituri}
        %\author[2]{Meir Feder}
        %\affil[1]{School of Electrical Engineering, Tel Aviv University}
        %\affil[2]{School of Electrical Engineering, Tel Aviv University}
	
	%%%%%%
	%% Abstract: 
	%% If your paper is eligible for the student paper award, please add
	%% the comment "THIS PAPER IS ELIGIBLE FOR THE STUDENT PAPER
	%% AWARD." as a first line in the abstract. 
	%% For the final version of the accepted paper, please do not forget
	%% to remove this comment!
	%%
	
	\begin{abstract}		
	In this paper we consider the problem of universal {\em batch} learning in a misspecification setting with log-loss. In this setting the hypothesis class is a set of models $\Theta$. However, the data is generated by an unknown distribution that may not belong to this set but comes from a larger set of models $\Phi \supset \Theta$. Given a training sample, a universal learner is requested to predict a probability distribution for the next outcome and a log-loss is incurred. The universal learner performance is measured by the regret relative to the best hypothesis matching the data, chosen from $\Theta$.
    Utilizing the minimax theorem and information theoretical tools, we derive the optimal universal learner, a mixture over the set of the data generating distributions, and get a closed form expression for the min-max regret. We show that this regret can be considered as a constrained version of the conditional capacity between the data and its generating distributions set. We present tight bounds for this min-max regret, implying that the complexity of the problem is dominated by the richness of the hypotheses models $\Theta$ and not by the data generating distributions set $\Phi$. We develop an extension to the Arimoto-Blahut algorithm for numerical evaluation of the regret and its capacity achieving prior distribution. We demonstrate our results for the case where the observations come from a $K$-parameters multinomial distributions while the hypothesis class $\Theta$ is only a subset of this family of distributions.
	\end{abstract}
	
	\section{Introduction}\label{Intruduction}
	In the recent years there has been a flourishing interest in the field of statistical learning. In the common setting of the learning problem there is a batch of training samples from an unknown data source, used to  predict the next ``test'' outcome as accurately as possible, under a given loss function. This paper starts with discussing the unsupervised batch learning problem, which is essentially a probability estimation problem, where a batch of outcomes $y^{N-1}$ is given and the learner is requested to evaluate the probability for the next outcome $y_N$, see \cite{DistributionEstimation}. Using similar tools we extend our analysis to the supervised learning problem, see e.g., \cite{StatisticalLearning1},\cite{StatisticalLearning2}, in which the training is composed of pairs $x^{N-1},y^{N-1}$ of features and labels, and the goal is to predict the test label $y_N$ given the training and the test feature $x_N$.
    In contrast to the common batch setting in learning theory, the common setting in information theory is to predict sequentially, sample by sample in an online manner, the next outcome distribution of the data source. The resulting online prediction problem, in the unsupervised setting, essentially assigns a probability distribution to the whole data sequence $y^N$. 

    Universal learning or prediction also assumes a set of hypotheses $\Theta$. The non-universal learner who knows the data generation model can choose the best hypothesis in this set, which serves as a reference for the universal learner that does not know the true data generation distribution. Universal prediction with respect to a given class has been investigated thoroughly, see \cite{UniversalPrediction},\cite{RobustInference},\cite{SequentialMisspecification}, under various data generating distribution settings with log-loss measure. In addition, preliminary online supervised universal learning problems were also investigated in\cite{OnlineLearning}. All these works consider online learning. Recently, however, there was also a progress in formulating and solving the universal batch learning, both in supervised and unsupervised data settings, from the information theory point of view, see e.g., \cite{BatchLearning},\cite{UniversalLearningIndividual},\cite{UniversalLearningIndividualISIT},\cite{DeepPNML}. 

    In universal learning a crucial element is the assumption on the data generation mechanism. In the stochastic setting, it is assumed that the data is generated by some unknown distribution from the class $\Theta$ of reference models. On the other extreme, in the individual setting nothing is assumed on the data, it is an arbitrary individual sequence. Yet, there is another setting, the misspecified stochastic setting, where the data is generated from a distribution in a set of distributions $\Phi$, while the set of hypotheses $\Theta$, from which the reference learner is chosen, is only a subset of $\Phi$. This is actually the common case in ``agnostic'' statistical learning where the data generation mechanism is, say, any i.i.d. distribution, while the reference is a set of hypotheses $\Theta$. In information theoretic context, the first mentioning of this misspecified setting is in \cite{Barron}, but it was not investigated further until recently, where it was considered for online learning with log-loss, \cite{RobustInference},\cite{SequentialMisspecification} and for batch learning rates of convergence bounds in \cite{NirPaper}.
    
    This work takes the misspecified setting a step further to the case of universal batch learning. It consider first the unsupervised case, and given a training batch $y^{N-1}$ of size $N-1$, it finds a universal distribution for the next outcome $y_N$, denoted by $Q(y_N|y^{N-1})$. The work analyzes the regret associated with the universal learner, which is expectation with respect to the true distribution of the difference between the log-loss associated with the universal predictor and that of the hypothesis from the set $\Theta$, that best fits the data. The extension to the supervised setting, where the data features $x^N$ are i.i.d, is quite straightforward.
    
    Indeed, our main contributions in this paper is the exact formulation of the min-max regret and its corresponding universal probability for the ``test'' outcome given a batch of data samples and under the misspecification setting. We show that the min-max regret can be thought of as a constrained version of the conditional capacity between the data and the set of data generating distributions $\Phi$. Interestingly, due to the constraint, we show that the min-max regret turns out to be approximately equals to the regret in the stochastic case assuming that the data was generated by a distribution from the set of hypotheses $\Theta$ rather than the larger set $\Phi$. This implies that the complexity of the problem is actually a function of the hypotheses set rather than the larger family of distributions that may generate the data. 
    %We give an example of this phenomenon in the set of $K$ parameters multinomial data distributions, in which the regret is given approximately by $\sim\frac{K}{2N}$, as in the stochastic setting of the batch learning problem, see \cite{parametricExample}.

    As we have mentioned above the regret can be interpreted as a constraint version of the conditional capacity between $Y_N$ and $\Phi$.
    In general, it is hard to evaluate an analytical closed form for the capacity achieving prior distribution and for the capacity. In classical communication theory and recently also in universal theory theory, various of of extensions to the classical Arimoto-Blahut algorithms \cite{Blahut}\cite{Arimoto} were developed for numerical evaluation of the capacity achieving prior distribution and its resultant capacity \cite{FogelFederCalculation}\cite{ArimotoBlahutExt1}\cite{ArimotoBlahutExt2}\cite{ArimotoBlahutExt3}\cite{ArimotoBlahutExt4}\cite{RobustInference}. Therefore, for numerical evaluation of capacity achieving prior distribution and the regret we developed an extension for the Arimoto-Blahut algorithm for the universal batch prediction under the misspecification setting, and demonstrates the theoretical results by numerical evaluation over the i.i.d Bernoulli distributions sets.

    The paper outline is as follows: in section \ref{ProblemStatement} we give a formal problem statement and basic definitions for the universal batch learning under the misspecification setting. Following that, in section \ref{Misspecified Batch Learning Analysis} we analyze and present our main contributions: min-max regret derivation, bounds for the regret and their meaning, and examples. In section \ref{Arimoto Blahut Algorithm Extension} we develop the Arimoto-Blahut algorithm extension and in section \ref{Misspecified Universal Batch Learning Extensions} we extend our main results to the combined batch and online learning and to the supervised setting. In section \ref{Conclusions and Future Research Directions} we finally conclude our work and present future possible directions for further research.

	\section{Problem Statement}\label{ProblemStatement}
	\subsection{Learning under the misspecification setting}
	Let us define the problem of batch learning under the misspecification setting as follows: First, $N$ data samples, denoted by $y^{N}$, are generated by a given distribution $P_\phi(y^{N})$ from a set of potential data generating distributions $\Phi$. The learner gets the batch of training samples $y^{N-1}$, and predicts the next outcome's conditional distribution, denoted by $Q(y_N|y^{N-1})$, in a universal manner, i.e.,  without any knowledge of the specific selected data generating distribution. The accuracy of the universal learner is measured by the well-known log-loss function. The performance of the universal learner is evaluated by the regret, i.e., relative to the log-loss of the best learner from a set of hypotheses or distributions, denoted by $\Theta$, given the batch of training samples, where each hypothesis is defined by a probability distribution $P_\theta(y^N)$. Loosely speaking, the misspecification setting means that there is a mismatch between the data generating distributions set $\Phi$ and a smaller set of hypotheses $\Theta \subset \Phi$ that tries to predict the data generating distribution. 
 %Essentially, this setting is suitable to many practical learning problems, where the source of data is unknown specifically, while we are aiming to predict the next data sample by using a batch of samples and only partial assumptions or hypotheses about its behavior are available.	
	\subsection{Basic definitions}
	To formulate and analyze the the problem we will use the following definitions.
 
	The regret in case of data generating distribution $P_\phi \in \Phi$, hypothesis $P_\theta \in \Theta$ and a predictive distribution $Q$ is defined by:
	\begin{equation}
		R_N(P_{\phi},P_{\theta},Q)=\sum_{y^N}P_{\phi}(y^N)\log \left({\frac{P_{\theta}(y_N|y^{N-1})}{Q(y_N|y^{N-1})}}\right)
	\end{equation}
	In this paper we are interested in the analysis of the min-max regret which is given by the following:
	\begin{equation}
            \label{minmaxRegretDefinition}
		R^*_N(\Theta,\Phi) = \min_{Q} \max_{P_{\phi} \in \Phi} \max_{P_{\theta} \in \Theta} R_N(P_{\phi},P_{\theta},Q)
	\end{equation}
	In words, the min-max regret minimizes the regret of the worst data generating distribution and best matching hypothesis to the data, via a universal distribution $Q$, given the batch training. 	
	Let us define also the conditional capacity of the distributions set $\Phi$ by the following:	
	\begin{align}
		\begin{aligned}
			C_{c,N}(\Phi) &\equiv \max_{\pi({\phi})}I(Y_N;\Phi|Y^{N-1})
                \\& = \int_{\phi} \sum_{y^N} P(y^N,\phi)       \log \left({\frac{P(y_N|y^{N-1},\phi)}{P({y_{N}|y^{N-1})}}}\right)        \,d\phi  
		\end{aligned}
	\end{align}
	where $\pi(\phi)$ is the conditional capacity prior distribution and $P(Y^N=y^N,\Phi=\phi) \equiv \pi(\phi)P_\phi(y^N)$.
        
        The conditional KL divergence will be defined by:
	\begin{align}
		\begin{aligned}
            \label{condDivergence}
			D_{c,N}(P_{\phi}||P_{\theta}) &\equiv D(P_{\phi}(Y_N|Y^{N-1})||P_{\theta}(Y_N|Y^{N-1}))
            \\& = \sum_{y^N} P_\phi(y^N) \log\left(\frac{P_{\phi}(y_N|y^{N-1})}{P_{\theta}(y_N|y^{N-1})}\right)
		\end{aligned}
	\end{align}
	and the conditional divergence between $P_{\phi} \in \Phi$ and a set of distributions $\Theta$ by:
	\begin{align}
		\begin{aligned}
            \label{DivergenceWithSet}
			D_{c,N}(P_{\phi}\|\Theta) \equiv \inf_{{P_{\theta}\in\Theta}}D_{c,N}(P_{\phi}\|P_{\theta}).
		\end{aligned}
	\end{align}
    
    In addition, let us define 
    %for simplicity of presentation of our analysis and results, 
    the expectation of any arbitrary function $f(\phi)$, w.r.t the probability distribution $\pi_(\phi)$ by the following operator:
    \begin{align}
        \begin{aligned}
        \label{PiExpectation}
            E_{\pi(\phi)} \{  f(\phi) \} = \int_{\phi} \pi{(\phi)} f(\phi) \,d\phi.
        \end{aligned}
    \end{align}        
    Note that for compactness, with small abuse of notation, we will occasionally denote the operator $E_{\pi(\phi)} \{ \cdot \}$ by $E_{\pi} \{  \cdot \}$.
    
    These definitions will be used in the analysis and derivations of our main results.		
    
        \section{Misspecified Batch Learning Analysis}\label{Misspecified Batch Learning Analysis}
        
        \subsection{Min-Max Regret Derivation}
	Our first main result for the problem of universal batch learning under the misspecification setting, is an analytical formulation for the min-max regret, given as follows:
	%First Main result - min-max regret and its universal distribution
	\begin{thm}\label{ThmFirst}
		The min-max regret of the problem defined in \RomanNumeralCaps{2}-A is given by:		
		\begin{align}									
			\begin{aligned}		
				\label{thmRegret}	
			R^*_N(\Theta,\Phi) = \max_{\pi({\phi})}  \left( I(Y_N;\Phi|Y^{N-1}) - E_{\pi(\phi)} \{  D_{c,N}({{P_{\phi}}\|\Theta}) \} \right)
		\end{aligned}
		\end{align}
  %Note explicitly the definition of E_\pi
		and the universal distribution for a given $\pi(\phi)$ is given by:
		\begin{align}
			\begin{aligned}
				Q_{\pi}({y_N|y^{N-1}}) = \frac{\int_{\phi}\pi(\phi)P_{\phi}(y^{N})  \,d\phi}{\int_{\phi}\pi(\phi)P_{\phi}(y^{N-1})  \,d\phi}.
			\end{aligned}
		\end{align}		
	\end{thm}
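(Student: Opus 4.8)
\emph{Proof idea.} The argument follows the redundancy--capacity scheme, adapted here to the conditional, misspecified game. \emph{Step 1 (eliminate the maximization over $\theta$).} For fixed $P_{\phi}$ and $Q$, only the factor $P_{\theta}(y_N|y^{N-1})$ in $R_N(P_{\phi},P_{\theta},Q)$ depends on $\theta$. Multiplying and dividing the argument of the logarithm by $P_{\phi}(y_N|y^{N-1})$ and splitting the sum yields
\begin{equation}
R_N(P_{\phi},P_{\theta},Q) = D_{c,N}(P_{\phi}\|Q) - D_{c,N}(P_{\phi}\|P_{\theta}),
\end{equation}
where $D_{c,N}(P_{\phi}\|Q)\equiv\sum_{y^N}P_{\phi}(y^N)\log\frac{P_{\phi}(y_N|y^{N-1})}{Q(y_N|y^{N-1})}$ has the same conditional--divergence form as \eqref{condDivergence}. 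Since the first term does not involve $\theta$, maximizing over $P_{\theta}\in\Theta$ replaces the second term by its infimum, i.e.\ by $D_{c,N}(P_{\phi}\|\Theta)$ of \eqref{DivergenceWithSet}, so \eqref{minmaxRegretDefinition} becomes $R^*_N(\Theta,\Phi)=\min_Q\max_{P_{\phi}\in\Phi}\bigl(D_{c,N}(P_{\phi}\|Q)-D_{c,N}(P_{\phi}\|\Theta)\bigr)$.

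\emph{Step 2 (mixed strategies and the minimax exchange).} Next I would lift the inner maximization to priors: for every $Q$ we have $\max_{P_{\phi}\in\Phi}(\cdot)=\max_{\pi(\phi)}E_{\pi(\phi)}(\cdot)$, since the bracket is linear in $\pi$ and a point mass attains the value. Writing $F(Q,\pi)=E_{\pi(\phi)}\{D_{c,N}(P_{\phi}\|Q)\}-E_{\pi(\phi)}\{D_{c,N}(P_{\phi}\|\Theta)\}$, observe that $F$ is convex in $Q$ --- the map $Q\mapsto-\sum_{y^N}P_{\phi}(y^N)\log Q(y_N|y^{N-1})$ is a nonnegative combination of $-\log$ terms, hence convex, and the $\Theta$-term is constant in $Q$ --- and affine, therefore concave, in $\pi$; moreover $Q$ ranges over the convex set of conditional distributions and $\pi$ over the convex set of priors on $\Phi$. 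A minimax theorem then gives $R^*_N(\Theta,\Phi)=\max_{\pi(\phi)}\min_Q F(Q,\pi)$.

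\emph{Step 3 (solve the inner minimization).} For a fixed $\pi$ only the first summand of $F(Q,\pi)$ depends on $Q$, and it equals a $Q$-independent constant minus $\sum_{y^N}\bigl(\int_{\phi}\pi(\phi)P_{\phi}(y^N)\,d\phi\bigr)\log Q(y_N|y^{N-1})$. By Gibbs' inequality, applied separately for each conditioning string $y^{N-1}$, the minimizer is the conditional of the mixture, $Q_{\pi}(y_N|y^{N-1})=\frac{\int_{\phi}\pi(\phi)P_{\phi}(y^{N})\,d\phi}{\int_{\phi}\pi(\phi)P_{\phi}(y^{N-1})\,d\phi}$, which is the claimed universal predictor. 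Substituting $Q=Q_{\pi}$ and using $P(y^N,\phi)=\pi(\phi)P_{\phi}(y^N)$ together with the mixture marginal $P(y_N|y^{N-1})=Q_{\pi}(y_N|y^{N-1})$, the quantity $E_{\pi(\phi)}\{D_{c,N}(P_{\phi}\|Q_{\pi})\}$ is precisely $I(Y_N;\Phi|Y^{N-1})$ as defined in Section \ref{ProblemStatement}. Hence $\min_Q F(Q,\pi)=I(Y_N;\Phi|Y^{N-1})-E_{\pi(\phi)}\{D_{c,N}(P_{\phi}\|\Theta)\}$, and taking $\max_{\pi(\phi)}$ gives \eqref{thmRegret}.

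\emph{Expected main obstacle.} The only nonroutine point is the minimax exchange in Step 2: one must endow the set of priors $\pi(\phi)$ with a topology --- e.g.\ weak convergence, under which it is compact when $\Phi$ is --- in which $\pi\mapsto F(Q,\pi)$ is upper semicontinuous, and verify lower semicontinuity (and convexity) of $Q\mapsto F(Q,\pi)$, so that Sion's (or Ky~Fan's) minimax theorem applies; alternatively one first proves the identity for finitely supported priors and passes to the limit. All remaining steps are the two elementary divergence manipulations above together with Gibbs' inequality.
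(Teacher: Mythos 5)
Your proposal is correct and follows essentially the same route as the paper: the same decomposition of the regret into $D_{c,N}(P_{\phi}\|Q)-D_{c,N}(P_{\phi}\|\Theta)$, the same lifting of the inner maximization to a prior $\pi(\phi)$, the same appeal to Sion's minimax theorem using convexity in $Q$ and linearity in $\pi$, and the same identification of $E_{\pi}\{D_{c,N}(P_{\phi}\|Q_{\pi})\}$ with $I(Y_N;\Phi|Y^{N-1})$. The only cosmetic difference is that you obtain the minimizing $Q_{\pi}$ via Gibbs' inequality applied per conditioning string $y^{N-1}$, whereas the paper differentiates a Lagrangian; both yield the same mixture conditional.
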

	
	%First Main result proof and derivation
	\begin{proof}
		The regret is given by the following:
		\begin{equation}
			R_N(P_{\phi},P_{\theta},Q)=\sum_{y^N}P_{\phi}(y^N)\log \left({\frac{P_{\theta}(y_N|y^{N-1})}{Q(y_N|y^{N-1})}}\right)
		\end{equation}
		By simple algebraic manipulations:
		\begin{align}
			\begin{aligned}
			R_N(P_{\phi},P_{\theta},Q) & =\sum_{y^N}P_{\phi}(y^N)\log \left({\frac{P_{\phi}(y_N|y^{N-1})}{Q(y_N|y^{N-1})}}\right) \\& - \sum_{y^N}P_{\phi}(y^N)\log \left({\frac{P_{\phi}(y_N|y^{N-1})}{P_{\theta}(y_N|y^{N-1})}}\right)
		\end{aligned}
		\end{align}
		Using the conditional KL divergence, as defined in (\ref{condDivergence}), we are getting the below formulation:
		\begin{align}
			\begin{aligned}
			R_N(P_{\phi},P_{\theta},Q) = D_{c,N}({{P_{\phi}}\|{Q}}) - D_{c,N}({{P_{\phi}}\|P_{\theta}})
			\end{aligned}
		\end{align}
		Hence, the optimal regret in the sense of min-max regret, as defined in (\ref{minmaxRegretDefinition}), can be rewritten by the following:
		\begin{align}
			\begin{aligned}
                \label{minmaxRegretInKL}
			R^*_N(\Theta,\Phi) = \min_{Q} \max_{P_{\phi} \in \Phi} \max_{P_{\theta} \in \Theta} \left(D_{c,N}({{P_{\phi}}\|{Q}}) - D_{c,N}({{P_{\phi}}\|P_{\theta}})\right)
			\end{aligned}
		\end{align}
		Since the first term in (\ref{minmaxRegretInKL}) is not a function of $\Theta$, we get the following:
		\begin{align}
			\begin{aligned}
                \label{minmaxRegretAsDiffOfDivergences}
			R^*_N(\Theta,\Phi) & = \min_{Q} \max_{P_{\phi} \in \Phi} ( D_{c,N}({{P_{\phi}}\|{Q}}) - \min_{P_{\theta} \in \Theta} D_{c,N}({{P_{\phi}}\|P_{\theta}}) )
			\\& = \min_{Q} \max_{P_{\phi} \in \Phi} \left( D_{c,N}({{P_{\phi}}\|{Q}}) - D_{c,N}({{P_{\phi}}\|\Theta}) \right)
			\end{aligned}
		\end{align}
        where we used definition (\ref{DivergenceWithSet}) in the last step of equation (\ref{minmaxRegretAsDiffOfDivergences}).
		
        Let us translate the min-max problem into a mixture min-max problem by the following:
        \begin{align}
			\begin{aligned}
                \label{mixtureRegret}
			R^*_N(&\Theta,\Phi) = \min_{Q} \max_{\pi({\phi})} 
                \int_{\phi} \left( D_{c,N}({{P_{\phi}}\|{Q}}) - D_{c,N}({{P_{\phi}}\|\Theta}) \right) \,d\phi
                \\&= \min_{Q} \max_{\pi({\phi})} 
                \left( E_\pi \{  D_{c,N}({{P_{\phi}}\|{Q}}) \} 
			- E_\pi \{  D_{c,N}({{P_{\phi}}\|{\Theta}}) \} \right)
			\\& \equiv \min_{Q} \max_{\pi({\phi})} R_N(\pi(\phi),Q)
			\end{aligned}
		\end{align}
        where we used, for simplicity of presentation, definition (\ref{PiExpectation}) in (\ref{mixtureRegret}). Note that by definition:
		\begin{align}
			\label{TwoTermsRegret}
			\begin{aligned}
				R_N(\pi(\phi),Q) \equiv
				\underbrace{E_\pi \{  D_{c,N}({{P_{\phi}}\|{Q}}) \}}_{\propto \pi(\phi);\propto -\log{Q}} -
				\underbrace{E_\pi \{  D_{c,N}({{P_{\phi}}\|{\Theta}}) \}}_{\propto \pi(\phi);\text{not a function of } Q}
			\end{aligned}
		\end{align}
  
		  Since the first term of $R_N(\pi(\phi),Q)$ is proportional to $-\log(Q)$ and the second term is not a function of $Q$, (\ref{TwoTermsRegret}) is a convex function w.r.t $Q$. 
		Moreover, both the first and second terms are concave functions w.r.t $\pi(\phi)$, due to their linearity in $\pi(\phi)$. Thus, $R_N(\pi(\phi),Q)$ is also a concave function w.r.t $\pi(\phi)$.
		Hence, according to Sion's minimax Theorem \cite{Sion} for convex-concave functions, the min-max problem can be translated into a max-min problem as follows:
  
		\begin{align}
			\begin{aligned}
				\label{maxminRegret}
				R^*_N(\Theta,\Phi) = \max_{\pi({\phi})} \min_{Q} R_N(\pi(\phi),Q).
			\end{aligned}
		\end{align}	
		Now let us minimize $R_N(\pi(\phi),Q)$ w.r.t $Q$ by zeroing the derivative of the following Lagrangian:
		\begin{align}
			\begin{aligned}
			L = R_N(\pi(\phi),Q) + \sum_{y^{N-1}}\lambda_{y^{N-1}}\sum_{y_N}Q(y_N|y^{N-1})
			\end{aligned}
		\end{align}	
		\begin{align}
			\begin{aligned}
			\frac{\partial L}{\partial Q} \Big{|}_{Q_{\pi}} = - \frac{1}{Q_{\pi}(y_N|y^{N-1})}\int_{\phi}\pi(\phi)P_{\phi}(y^{N})  \,d\phi + \lambda_{y^{N-1}} = 0
			\end{aligned}
		\end{align}	
		Therefore, we get:
		\begin{equation}
			Q_{\pi}({y_N|y^{N-1}}) = \frac{\int_{\phi}\pi(\phi)P_{\phi}(y^{N})  \,d\phi}{\lambda_{y^{N-1}}}
		\end{equation}	
		and in order to meet the constraint of $\sum_{y_{N}}Q_{\pi}({y_N|y^{N-1}}) = 1$ we get:
		\begin{align}
			\begin{aligned}
			\sum_{y_N}Q_{\pi}({y_N|y^{N-1}}) & = \frac{\int_{\phi}\pi(\phi)\sum_{y_{N}}P_{\phi}(y^{N})  \,d\phi}{\lambda_{y^{N-1}}} \\& = 
			\frac{\int_{\phi}\pi(\phi)P_{\phi}(y^{N-1})  \,d\phi}{\lambda_{y^{N-1}}} = 1
			\end{aligned}
		\end{align}	
		which leads immediately to the following Lagrange multiplier:
		\begin{equation}
			\lambda_{y^{N-1}} = \int_{\phi}\pi(\phi)P_{\phi}(y^{N-1})  \,d\phi.
		\end{equation}	
		Combining all the above, we get the minimizer $Q$ as:
		\begin{equation}
                \label{condQ}
			Q_{\pi}({y_N|y^{N-1}}) = \frac{\int_{\phi}\pi(\phi)P_{\phi}(y^{N})  \,d\phi}{\int_{\phi}\pi(\phi)P_{\phi}(y^{N-1})  \,d\phi}
		\end{equation}		
		or in other words,
		\begin{equation}
                \label{directQ}
			Q_{\pi}({y^{N}}) = \int_{\phi}\pi(\phi)P_{\phi}(y^{N})  \,d\phi,
		\end{equation}
            where similarly to the online case, (\ref{directQ})  is a mixture distribution over the set $\Phi$, see for example \cite{UniversalPrediction} and \cite{RobustInference}. As will be discussed later, the choice of the mixture prior is making the difference.
		
            Using (\ref{condQ}) We observe that:
		\begin{align}
			\begin{aligned}
			\label{FirstTerm}
			&E_\pi \{  D_{c,N}({{P_{\phi}}\|{Q_\pi}}) \} = \int_{\phi} \pi{(\phi)} D_{c,N}({{P_{\phi}}\|{Q_\pi}}) \,d\phi 
			\\& = \int_{\phi} \sum_{y^N} \pi{(\phi)}P_{\phi}(y^N)       \log \left({\frac{P_{\phi}(y_N|y^{N-1})}{Q_{\pi}(y_N|y^{N-1})}}\right)        \,d\phi
			\\& = \int_{\phi} \sum_{y^N} P(y^N,\phi)       \log \left({\frac{P(y_N|y^{N-1},\phi)}{P({y^{N})/P(y^{N-1})}}}\right)        \,d\phi  
			\\&  = E_{P(Y^N,\Phi)}  \left\{     \log \left({\frac{P(Y_N|Y^{N-1},\Phi)}{P({Y_{N}|Y^{N-1})}}}\right)  \right\} = I(Y_N;\Phi|Y^{N-1})
			\end{aligned}
		\end{align}
		where $P(Y^N=y^N,\Phi=\phi) \equiv \pi(\phi)P_\phi(y^N)$. Combining (\ref{TwoTermsRegret}), (\ref{maxminRegret}) and (\ref{FirstTerm}) we get (\ref{thmRegret}).
	\end{proof}    
	%End of first Main result proof and derivation	

    % First main result discussion:
    Theorem \ref{ThmFirst} shows that the batch learning min-max regret, under the misspecification setting, can be regarded as a constrained version of the conditional capacity, i.e., a constrained version of the batch learning regret under the classical stochastic setting \cite{BatchLearning}. 
    Moreover, the form of the result is similar to the online learning min-max regret, under the misspecification setting, as shown in \cite{RobustInference} Theorem 2 and in \cite{SequentialMisspecification}. In both cases the form of the regret is a combination of two terms: one is mutual information between the samples and the data source of distributions $\Phi$, given by $I(Y_N;\Phi|Y^{N-1})$ in our problem, and an additional penalty term, denoted by $E_{\pi(\phi)} \{  D_{c,N}({{P_{\phi}}\|\Theta}) \}$ in our problem, which quantifies the mismatch between the set of hypotheses $\Theta$ and the data generating distributions set $\Phi$, by the closest projected distribution from the set $\Phi$ into the set of hypotheses $\Theta$, in the sense of KL divergence.
    
    One might mistakenly think that the regret equals to the conditional capacity between the data samples and the data generating distributions family $\Phi$, but since the optimal prior distribution $\pi(\phi)$ should maximize the difference between these two terms this is not the case. For example, let us consider the following extreme: $\Phi$ is the $K$-parameters multinomial distributions set, corresponding to a distribution over an alphabet of size $K+1$, while $\Theta$ is only the $K'$-parameters multinomial distributions over an alphabet of size $K'+1$, where $K'<K$. In this case, $D_{c,N}(P_{\phi}\|P_{\theta}) = 0$ when $P_{\phi} \in \Theta$, however, when $P_{\phi} \notin \Theta$ a zero probability is assigned to symbols that are not in $\Theta$ and so $D_{c,N}(P_{\phi}\|\Theta)=\infty$. Hence, clearly the prior distribution $\pi(\phi)$ must be zeroed outside the set of hypotheses $\Theta$, i.e., $\pi(\phi) = 0 ~ \forall \phi \notin \Theta$, in order to maximize (\ref{thmRegret}). In other words, in this example, the mixture distribution should be taken only over the hypotheses set $\Theta$, and therefore the regret equals exactly to the conditional capacity of the set $\Theta$, which according to \cite{parametricExample} equals to $C_{c,N}(\Theta) = \frac{K'}{2N} + o(n^{-1})$, as in the batch learning under the stochastic setting in which $\Phi\equiv\Theta$.
    Moreover, even when $D_{c,N}(P_{\phi}\|\Theta)<\infty$, surprisingly, we will show later in Theorem \ref{Thm2} that due to this constraint the prior $\pi(\phi)$ that maximizes (\ref{thmRegret}) is such that most of its distribution mass is concentrated over $\Theta$, and not over the entire set of data generating distributions $\Phi$. Thus, the complexity of the problem is governed by the set of hypotheses $\Theta$, rather than the larger set of data generating distributions $\Phi$.
    
    In addition, Theorem \ref{ThmFirst} shows that the universal distribution $Q$ is a mixture over the distributions set, as in all of the known unsupervised universal learning problems: both in the online learning under the classical stochastic and individual sequences settings \cite{UniversalPrediction}, the misspecification setting \cite{RobustInference}\cite{SequentialMisspecification}, and the batch learning under the stochastic setting \cite{BatchLearning}.
    \subsection{Bounding the Min-Max Regret}
	%Lemma proof for Theorem 2		
	Our second main contribution, bounds the min-max regret from the below simply by the conditional capacity of the set of distributions $\Theta$. 
    Moreover, it bounds the regret from the above approximately by the conditional capacity of a small extension of that set $\Theta$. 
    The proof of this upper bound requires the following Lemma, that upper bounds $J(\pi) \equiv I(Y_N;\Phi|Y^{N-1}) - E_{\pi}\{D_{c,N}(P_\phi\|\Theta)\}$. The proof here is similar to that of Lemma 11 in \cite{SequentialMisspecification}.
 
	\begin{lem}
		\label{lemma1}
		For any $\pi_{0}(\phi)$, $\pi_{1}(\phi)$ and $\lambda \in [0,1]$ 
		\begin{align}
			\begin{aligned}
				J(\lambda \pi_1 + (1-\lambda) \pi_{0}) \leq \lambda J(\pi_1) + (1-\lambda) J(\pi_{0}) + h(\lambda)
			\end{aligned}
		\end{align}
		where $h(\lambda)$ is the binary entropy of a Bernoulli distribution $Ber(\lambda)$.
	\end{lem}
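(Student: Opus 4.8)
The plan is to exploit the fact that the penalty term $E_{\pi}\{D_{c,N}(P_\phi\|\Theta)\}$ is \emph{linear} in $\pi$, so under $\pi \equiv \lambda\pi_1+(1-\lambda)\pi_0$ it contributes exactly the convex combination $\lambda E_{\pi_1}\{D_{c,N}(P_\phi\|\Theta)\}+(1-\lambda)E_{\pi_0}\{D_{c,N}(P_\phi\|\Theta)\}$, with no slack. Consequently the entire slack $h(\lambda)$ must come from the mutual information term, and it suffices to establish the ``almost concavity'' bound
\begin{align}
	\begin{aligned}
		I_{\pi}(Y_N;\Phi|Y^{N-1}) \leq \lambda\, I_{\pi_1}(Y_N;\Phi|Y^{N-1}) + (1-\lambda)\, I_{\pi_0}(Y_N;\Phi|Y^{N-1}) + h(\lambda),
	\end{aligned}
\end{align}
where $I_{\pi'}$ denotes the conditional mutual information computed under the joint law $\pi'(\phi)P_\phi(y^N)$. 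This mirrors the time-sharing argument behind Lemma 11 in \cite{SequentialMisspecification}.

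To prove this I would introduce a selector random variable $U\sim Ber(\lambda)$ and define the joint distribution of $(U,\Phi,Y^N)$ by $P(U{=}1)=\lambda$, $\Phi\,|\,U{=}u \sim \pi_u$, and $Y^N\,|\,\Phi{=}\phi,U{=}u \sim P_\phi(y^N)$. By construction the marginal prior on $\Phi$ is exactly $\pi$, and $U - \Phi - Y^N$ is a Markov chain. Applying the chain rule for conditional mutual information to $I(Y_N;\Phi,U|Y^{N-1})$ in both orders gives, on one hand, $I(Y_N;\Phi,U|Y^{N-1}) = I(Y_N;\Phi|Y^{N-1}) + I(Y_N;U|Y^{N-1},\Phi)$ with the last term vanishing by the Markov property (conditioned on $\Phi$, the data is independent of $U$), and on the other hand $I(Y_N;\Phi,U|Y^{N-1}) = I(Y_N;U|Y^{N-1}) + I(Y_N;\Phi|Y^{N-1},U)$. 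Equating the two expansions yields
\begin{align}
	\begin{aligned}
		I_{\pi}(Y_N;\Phi|Y^{N-1}) = I(Y_N;U|Y^{N-1}) + I(Y_N;\Phi|Y^{N-1},U).
	\end{aligned}
\end{align}

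The proof then finishes with two elementary bounds. First, since conditioning on $U{=}u$ makes the joint law of $(\Phi,Y^N)$ equal to $\pi_u(\phi)P_\phi(y^N)$, we have $I(Y_N;\Phi|Y^{N-1},U) = \lambda I_{\pi_1}(Y_N;\Phi|Y^{N-1}) + (1-\lambda) I_{\pi_0}(Y_N;\Phi|Y^{N-1})$. Second, $I(Y_N;U|Y^{N-1}) \leq H(U|Y^{N-1}) \leq H(U) = h(\lambda)$. Substituting both into the displayed identity gives the almost-concavity bound; adding $-E_\pi\{D_{c,N}(P_\phi\|\Theta)\}$, expanded via linearity, to both sides then yields $J(\lambda\pi_1+(1-\lambda)\pi_0) \leq \lambda J(\pi_1) + (1-\lambda)J(\pi_0) + h(\lambda)$.

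The only delicate point is the bookkeeping of which measure induces the conditioning variable $Y^{N-1}$ in each mutual-information term; once the joint law of $(U,\Phi,Y^N)$ is pinned down as above, all terms are automatically consistent, and the genuine crux is recognizing the Markov property $U-\Phi-Y^N$, which is exactly what forces the cross term $I(Y_N;U|Y^{N-1},\Phi)$ to vanish and thereby isolates $h(\lambda)$ as the whole of the slack.
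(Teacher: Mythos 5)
Your proof is correct and follows essentially the same route as the paper: the paper also introduces a Bernoulli selector $B\sim Ber(\lambda)$ forming the Markov chain $B\to\Phi\to Y^N$, expands $I(Y_N;B,\Phi|Y^{N-1})$ by the chain rule in both orders, kills the cross term via the Markov property, and bounds $I(Y_N;B|Y^{N-1})\le h(\lambda)$, with the penalty term handled by linearity exactly as you do.
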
	
	\begin{proof}
		Let us define the Markov chain triplet $B \to \Phi \to Y^N$, where $B \sim Ber(\lambda)$, $\lambda \in [0,1]$, $\Phi = \phi$ is conditionally distributed according to $\pi_{b}(\phi)$ given $B=b$ and $Y^N=y^N$ is conditionally distributed according to $P_{\phi}(y^N)$ given $\phi$.
		Note that the induced prior distribution over $\Phi$ is given by $\pi(\phi) = \lambda \pi_{1}(\phi) + (1-\lambda) \pi_{0}(\phi)$,
		and by definition we get for any $b \in \{0,1\}$:
		\begin{align}			
			\begin{aligned}				
				\label{lem1}
				J(\pi_{b}|B=b) & = I(Y_N;\Phi|Y^{N-1},B=b) 
				\\& - E_{\pi_{b}}\{ D_{c,N} (P_{\phi} \| \Theta ) | B=b \}
			\end{aligned}
		\end{align}
		By taking the expectation of (\ref{lem1}) according to the $B$ distribution we get:
		\begin{align}
			\begin{aligned}
				\label{lem2}
				E_B\{ J(\pi_{b}|B=b) \} = \lambda J(\pi_{1}) + (1-\lambda) J(\pi_{0})				
			\end{aligned}
		\end{align}
		while on the other hand,
		\begin{align}
			\begin{aligned}
				\label{lem3}
				E_B\{ J(\pi_{b}|B=b) \} &= I(Y_N;\Phi | B, Y^{N-1}) 
				\\& - E_{\pi}\{ D_{c,N} (P_{\phi} \| \Theta ) \}
			\end{aligned}
		\end{align}
		In addition, due to the Markov chain characteristics, we have $B\perp Y^N|\Phi$, which leads to $I(Y_N;B|\Phi,Y^{N-1}) = 0$. Therefore, by applying twice the mutual information chain rule we get:
		\begin{align}
			\begin{aligned}
				\label{lem4}
				I(Y_N;\Phi|Y^{N-1}) &= I(Y_N;B,\Phi|Y^{N-1}) - I(Y_N;B|\Phi,Y^{N-1})				
				\\& = I(Y_N;B,\Phi|Y^{N-1})
				\\& = I(Y_N;\Phi|B,Y^{N-1}) + I(Y_N;B|Y^{N-1}) 
				\\&\leq I(Y_N;\Phi|B,Y^{N-1}) + h(\lambda) 
			\end{aligned}
		\end{align}		
		Combining (\ref{lem1}), (\ref{lem2}), (\ref{lem3}) and (\ref{lem4}) with $J(\pi) = I(Y_N;\Phi|Y^{N-1}) - E_{\pi}\{ D_{c,N} (P_{\phi} \| \Theta ) \}$ completes the proof.
	\end{proof}

 We are now ready to state the following Theorem which bounds the min-max regret in terms of the conditional capacities of the set $\Theta$ and a slightly larger set:
	\begin{thm}\label{Thm2}
		Suppose $\Phi$ and $\Theta$ are sets of distributions s.t. $C_{c,N}(\Phi) = \tau_N \to 0$ and $\Theta \subseteq \Phi$. Then for every $\epsilon_N \gg \tau_N$ we have
		\begin{align}
			\begin{aligned}
                \label{RegretBounds}
				C_{c,N}(\Theta) \leq R^*_N(\Theta,\Phi) \leq C_{c,N}(\Theta_{{\epsilon}_N}) + o(1)
			\end{aligned}
		\end{align}		
		where $\Theta_{\epsilon} \equiv \{P_{\phi} \in \Phi: D_{c,N}(P_\phi||\Theta) < \epsilon \}$.
	\end{thm}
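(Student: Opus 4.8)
The plan is to prove the two inequalities separately, writing, via Theorem~\ref{ThmFirst}, $R^*_N(\Theta,\Phi)=\max_{\pi(\phi)}J(\pi)$ with $J(\pi)=I(Y_N;\Phi|Y^{N-1})-E_{\pi}\{D_{c,N}(P_\phi\|\Theta)\}$, and using Lemma~\ref{lemma1} for the upper bound.

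\textbf{Lower bound.} This is direct. In the maximization defining $R^*_N(\Theta,\Phi)$ one may restrict the prior $\pi(\phi)$ to be supported on $\{\phi: P_\phi\in\Theta\}$. For every such $\phi$, definition~(\ref{DivergenceWithSet}) gives $D_{c,N}(P_\phi\|\Theta)=0$ (take the infimizing $\theta$ with $P_\theta=P_\phi$), so the penalty term vanishes and $J(\pi)=I(Y_N;\Phi|Y^{N-1})$ for all such priors. Maximizing this conditional mutual information over priors supported on $\Theta$ is by definition $C_{c,N}(\Theta)$, hence $R^*_N(\Theta,\Phi)\ge C_{c,N}(\Theta)$.

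\textbf{Upper bound.} Let $\pi^*$ attain the maximum of $J$ (existence follows from Theorem~\ref{ThmFirst}), so $R^*_N(\Theta,\Phi)=J(\pi^*)$. Split $\pi^*$ according to membership in $\Theta_{\epsilon_N}$: put $\beta=\pi^*(\Theta_{\epsilon_N})$ and write $\pi^*=\beta\,\pi_{\mathrm{in}}+(1-\beta)\,\pi_{\mathrm{out}}$, with $\pi_{\mathrm{in}},\pi_{\mathrm{out}}$ the conditionals of $\pi^*$ on $\Theta_{\epsilon_N}$ and on $\Phi\setminus\Theta_{\epsilon_N}$. Lemma~\ref{lemma1} with $\lambda=\beta$ yields $R^*_N(\Theta,\Phi)\le \beta J(\pi_{\mathrm{in}})+(1-\beta)J(\pi_{\mathrm{out}})+h(\beta)$. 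Now bound the two pieces. Since $\pi_{\mathrm{in}}$ is supported on $\Theta_{\epsilon_N}$, $J(\pi_{\mathrm{in}})\le I_{\pi_{\mathrm{in}}}(Y_N;\Phi|Y^{N-1})\le C_{c,N}(\Theta_{\epsilon_N})$. Since $\pi_{\mathrm{out}}$ is supported off $\Theta_{\epsilon_N}$, there $D_{c,N}(P_\phi\|\Theta)\ge\epsilon_N$, so $J(\pi_{\mathrm{out}})\le I_{\pi_{\mathrm{out}}}(Y_N;\Phi|Y^{N-1})-\epsilon_N\le \tau_N-\epsilon_N$, which is strictly negative for large $N$ because $\epsilon_N\gg\tau_N$. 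Substituting gives $R^*_N(\Theta,\Phi)\le C_{c,N}(\Theta_{\epsilon_N})-(1-\beta)\bigl(\epsilon_N-\tau_N-C_{c,N}(\Theta_{\epsilon_N})\bigr)+h(\beta)$.

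\textbf{Closing the argument and the main obstacle.} It remains to show the correction $-(1-\beta)(\epsilon_N-\tau_N-C_{c,N}(\Theta_{\epsilon_N}))+h(\beta)$ is $o(1)$; this is the crux. One handle is the a priori bound $R^*_N(\Theta,\Phi)\le C_{c,N}(\Phi)=\tau_N$ (valid since $J(\pi)\le I_\pi(Y_N;\Phi|Y^{N-1})$ for every $\pi$), which already gives (\ref{RegretBounds}) with $o(1)=\tau_N$. A second handle is $R^*_N(\Theta,\Phi)\ge C_{c,N}(\Theta)\ge 0$, which combined with the last display forces $(1-\beta)(\epsilon_N-\tau_N-C_{c,N}(\Theta_{\epsilon_N}))\le h(\beta)\le\log 2$; since $\epsilon_N\gg\tau_N\ge C_{c,N}(\Theta_{\epsilon_N})$ this confines the mass $1-\beta$ that $\pi^*$ puts outside $\Theta_{\epsilon_N}$, and optimizing the bound over $\beta\in[0,1]$ replaces $h(\beta)$ by $\log\!\bigl(1+e^{-(\epsilon_N-\tau_N+C_{c,N}(\Theta_{\epsilon_N}))}\bigr)$. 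The hard part is precisely this last step — controlling the binary-entropy overhead inherited from Lemma~\ref{lemma1} — and it is exactly where the hypothesis $\epsilon_N\gg\tau_N$ is needed: it makes $J(\pi_{\mathrm{out}})$ bounded below by roughly $-\epsilon_N$, thereby penalizing any non-negligible mass outside $\Theta_{\epsilon_N}$, pinning $\beta$ toward $1$, and making the sandwich in (\ref{RegretBounds}) tight rather than merely recovering the crude bound $R^*_N(\Theta,\Phi)\le\tau_N$.
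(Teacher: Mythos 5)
Your lower bound is correct and identical to the paper's (restrict the prior to $\Theta$, where the penalty vanishes), and your upper-bound skeleton — split the optimal prior across $\Theta_{\epsilon_N}$ and its complement, apply Lemma~\ref{lemma1}, bound the inside piece by $C_{c,N}(\Theta_{\epsilon_N})$ — is exactly the paper's. But the step you yourself flag as the crux is genuinely missing, and the mechanism you propose for closing it does not work in the relevant regime. After Lemma~\ref{lemma1} you must kill the entropy overhead $h(\beta)$, i.e.\ show that the mass $1-\beta$ that $\pi^*$ places outside $\Theta_{\epsilon_N}$ vanishes. Your route — the per-unit penalty $\epsilon_N$ for outside mass ``pins $\beta$ toward $1$'', and worst-case optimization over $\beta$ replaces $h(\beta)$ by $\log\bigl(1+e^{-(\epsilon_N-\tau_N+C_{c,N}(\Theta_{\epsilon_N}))}\bigr)$ — fails precisely when $\epsilon_N\to 0$, which is the case of interest (e.g.\ $\epsilon_N=N^{\alpha-1}$ in the multinomial example): the exponent then tends to $0$ and your optimized correction tends to $\log 2$, not to $0$. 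A penalty of order $\epsilon_N$ cannot compete with an entropy gain of order $1$, so nothing forces $\beta\to1$ this way; likewise your nonnegativity handle only yields $1-\beta\lesssim \log 2/\epsilon_N$, which is vacuous for $\epsilon_N\to0$. (Your other handle, $R^*_N(\Theta,\Phi)\le C_{c,N}(\Phi)=\tau_N\to0$, does make the literal statement true with $o(1)=\tau_N$, but it bypasses the comparison with $C_{c,N}(\Theta_{\epsilon_N})$ that the theorem is about, and you rightly treat it as a crude fallback rather than the proof.)

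The missing idea — and what the paper does — is to bound the outside mass directly by Markov's inequality applied to the penalty term: since $0\le R^*_N(\Theta,\Phi)=J(\pi^*)\le C_{c,N}(\Phi)-E_{\pi^*}\{D_{c,N}(P_\phi\|\Theta)\}$, one gets $E_{\pi^*}\{D_{c,N}(P_\phi\|\Theta)\}\le\tau_N$, hence $1-\beta=\pi^*\bigl(D_{c,N}(P_\phi\|\Theta)\ge\epsilon_N\bigr)\le\tau_N/\epsilon_N\to0$, exactly because $\epsilon_N\gg\tau_N$; this is where that hypothesis enters, not through the size of the penalty on the outside piece. With $1-\beta\to0$ your own display closes immediately: $(1-\beta)(\tau_N-\epsilon_N)\le0$ and $h(\beta)\to0$, giving $R^*_N(\Theta,\Phi)\le C_{c,N}(\Theta_{\epsilon_N})+o(1)$. (The paper phrases the same computation slightly differently, bounding $J$ of the outside conditional by $R^*_N(\Theta,\Phi)$ itself and rearranging to $R^*_N(\Theta,\Phi)\le R^*_N(\Theta,\Theta_{\epsilon_N})+h(\lambda)/(1-\lambda)$ with $\lambda\le\tau_N/\epsilon_N$.) A minor point: your intermediate display should read $-(1-\beta)\bigl(\epsilon_N-\tau_N+C_{c,N}(\Theta_{\epsilon_N})\bigr)$, with a plus sign on the capacity; the slip is harmless since it only weakens the bound.
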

	
	\begin{proof}
		Since $R^*_N(\Theta,\Phi) = \max_{\pi({\phi})} J(\pi) \geq 0$ there exists a $\pi(\phi)$ such that
		\begin{align}
			\begin{aligned}
				0 \leq J(\pi) & = I(Y_N;\Phi|Y^{N-1}) - E_\pi\{D_{c,N}(P_\phi \| \Theta)\} \\& \leq C_{c,N}(\Phi) -  E_\pi\{D_{c,N}(P_\phi \| \Theta)\}
			\end{aligned}
		\end{align}

		Therefore, the fraction of models of distributions, denoted by $\lambda$, from the set $\Phi$ that are not included in $\Theta_{\epsilon}$ can be upper bounded by Markov's inequality as follows:
		\begin{align}
			\begin{aligned}
				\lambda = P(D_{c,N}(P_\phi \| \Theta) > \epsilon) \leq \frac{E_\pi\{D_{c,N}(P_\phi \| \Theta)\}}{\epsilon} \leq \frac{C_{c,N}(\Phi)}{\epsilon}
			\end{aligned}
		\end{align}		
	Let us now define $\pi_{0}(\phi),\pi_{1}(\phi)$ as the distributions implied by $\pi(\phi)$ over the sets $\Theta_{\epsilon}$ and its complement, i.e., $\pi_{0}(\phi)=\pi(\phi)/(1-\lambda),\;\phi\in \Theta_\epsilon$ and zero otherwise, while $\pi_{1}(\phi)=\pi(\phi)/\lambda,\;\phi\notin \Theta_\epsilon$ and zero otherwise. As a consequence, $\pi(\phi) = \lambda \pi_{1}(\phi) + (1-\lambda) \pi_{0}(\phi),\;\forall \phi\in \Phi$.	
		Applying Lemma \ref{lemma1} and maximizing over $\pi_{0}(\phi)$ and $\pi_{1}(\phi)$ gives us the following:
		\begin{align}
			\label{SecResult1}
			\begin{aligned}
				J(\pi) &\leq \lambda J(\pi_1) + (1-\lambda) J(\pi_0) + h(\lambda)
				\\& \leq \lambda R^*_N(\Theta,\Phi) + (1-\lambda) R^*_N(\Theta,\Theta_{\epsilon}) + h(\lambda).
			\end{aligned}
		\end{align}		
		Maximizing (\ref{SecResult1}) over $\pi(\phi)$, combined with simple algebraic manipulations gives us the following:
		\begin{align}
			\begin{aligned}
				R^*_N(\Theta,\Phi) \leq R^*_N(\Theta,\Theta_{\epsilon}) + \underbrace{ \frac{h(\lambda)}{1-\lambda} }_{A(\lambda)}.
			\end{aligned}
		\end{align}
		By taking $\epsilon = \epsilon_N \gg \tau_N$, we get $\lambda = \lambda_N \to 0$, which leads to $A(\lambda_N) \to 0$. 
		Furthermore, since $\Theta \subseteq \Theta_{\epsilon}$ for any $\epsilon \geq 0$, then clearly $R^*_N(\Theta,\Theta_{\epsilon}) \leq C_{c,N}(\Theta_{\epsilon})$.		
		Hence, we get:
		\begin{align}
			\begin{aligned}
				C_{c,N}(\Theta) \leq R^*_N(\Theta,\Phi) \leq C_{c,N}(\Theta_{\epsilon_N}) + o(1)
			\end{aligned}
		\end{align}
		where the lower bound is given by the definition of the regret.
	\end{proof}

    %Discussion on second main result:
    Theorem \ref{Thm2} provides for the batch learning case a similar result to what was shown for the online learning under misspecification in \cite{SequentialMisspecification} Theorem 4. 
    
	Interestingly, it should be noted that while $\epsilon_N \gg \tau_N = C_{c,N}(\Phi)$, it does not mean that the conditional capacity of $\Theta_{\epsilon_N}$ is greater than the conditional capacity of $\Phi$. Its true meaning is that only a shell extension to the set $\Theta$, quantified by $\epsilon_N$, of distributions from $\Phi$ affects the min-max regret performance. If in addition $\epsilon_N \to 0$, the set $\Theta_{\epsilon_N}$ is a small extension of $\Theta$, implying that the resulting conditional capacities of $\Theta$ and $\Theta_{\epsilon_N}$ might be close. An illustration of this phenomenon is given in Figure \ref{FigIllustration}.
 
    In many interesting examples we indeed have $\epsilon_N \to 0$ and the conditional capacities coincide for large $N$. Such an example is where the observations come from a distribution in the set $\Phi$ of $K$-parameters multinomial distributions of the form: $(\phi_0,\phi_1,\dots,\phi_K)$, s.t $\sum_{k=0}^{K} \phi_k=1$. In \cite{parametricExample} it was shown that the regret, which is the conditional capacity of $\Phi$ in the stochastic setting of batch learning where the hypotheses class is also $\Phi$, equals to $C_{c,N}(\Phi) = \frac{K}{2N} + o(n^{-1})$. Therefore, by choosing $\epsilon_N = \frac{1}{N^{1-\alpha}}$, for any $0<\alpha<1$, we have both $\epsilon_N \gg C_{c,N}(\Phi)$ and $\epsilon_N \to 0$.    
    As noted, this may imply that $C_{c,N}(\Theta_{\epsilon_N}) \to C_{c,N}(\Theta)$ and according to a sandwich argument and (\ref{RegretBounds}), the min-max regret tends to $R^*_N(\Theta,\Phi) \to C_{c,N}(\Theta)$, which is the min-max regret for the batch learning under the classical stochastic setting, \cite{BatchLearning}. 
    
    To demonstrate this phenomenon, we can choose as an example, $K=1$, i,e., the Bernoulli distribution $Ber(\phi)$, where $\phi\in[0,1]$, to be the set of all the data generating distributions $\Phi$, and the set of hypotheses $\Theta$, to be the set of all $Ber(\theta)$, where $\theta\in[a,b]$ s.t. $0 \leq a < b \leq 1$. In this case, assuming the data samples are i.i.d and $\epsilon$ is small enough, it can be verified that $\Theta_{\epsilon}$ is the set of all $Ber(\theta_\epsilon)$, where $\theta_\epsilon \in\left[a- \delta_\epsilon(a),b+\delta_\epsilon(b)\right]$ and $\delta_\epsilon(c) = \sqrt{2{c(1-c){\epsilon}}}$, i.e., a small extension of the set $\Theta$. Note that in this example the  conditional capacity is a continuous function of $\epsilon$ (it is a composition of elementary functions). Thus, by taking $\epsilon = \epsilon_N$ as explained above, we get $C_{c,N}(\Theta_{\epsilon_N}) \to C_{c,N}(\Theta)$, and finally the min-max regret is approximately equal to $R^*_N(\Theta,\Phi) \approx C_{c,N}(\Theta)$.
    
    This observation means that under the conditions specified, the min-max regret in the misspecification setting converges to the regret in the case where the data generating distributions are approximately the ``smaller'' set of hypotheses $\Theta$ and not the ``bigger'' set $\Phi$. Another consequence is the fact that the universal distribution $Q$, is approximately a mixture distribution 
    %concentrated mostly over the set $\Theta$ and not $\Phi$, since 
    where the prior conditional capacity achieving distribution, $\pi(\phi)$, is concentrated mostly over the set $\Theta$.			
    Note that $\epsilon_N \to 0$ does not always mean that $C_{c,N}(\Theta_{\epsilon_N}) \to C_{c,N}(\Theta)$, but in general such cases are esoteric. Such an esoteric example was given in \cite{SequentialMisspecification} Appendix F for the online setting. 

    \begin{figure}[h]        
        \centering
        \includegraphics[width=0.5\textwidth]{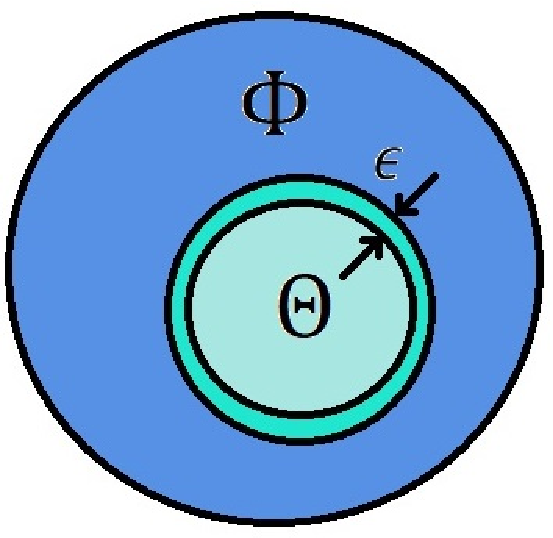}
        \caption{Theorem \ref{Thm2}} illustration
        \label{FigIllustration}
    \end{figure}
     
    \section{Arimoto Blahut Algorithm Extension}\label{Arimoto Blahut Algorithm Extension}
    As we have shown the regret can be interpreted as a constrained version of the conditional capacity between $Y_N$ and $\Phi$. Hence, $\pi(\phi)$ can be interpreted as the capacity achieving prior distribution.
    In general, it is hard to evaluate an analytical closed form for the capacity achieving prior distribution and for the capacity. In classical communication theory and recently also in universal prediction theory, various extensions to the classical Arimoto-Blahut algorithms \cite{Blahut}\cite{Arimoto} were developed for numerical evaluation of the capacity achieving prior distribution and its resultant capacity \cite{FogelFederCalculation}\cite{ArimotoBlahutExt1}\cite{ArimotoBlahutExt2}\cite{ArimotoBlahutExt3}\cite{ArimotoBlahutExt4}\cite{RobustInference}. Therefore, for numerical evaluation of $\pi(\phi)$ and the regret $R_N^*(\Theta,\Phi)$ we developed an extension for the Arimoto-Blahut algorithm for the universal batch prediction under the misspecification setting. The following Theorem \ref{thmArimotoBlahut} derives an upper and lower bounds to the regret. These bounds are used as convergence criteria of the iterative Arimoto-Blahut algorithm extension. Moreover, the form of the bounds as a difference between two conditional divergences implies the structure of the algorithm as we will show in the sequel.
    \subsection{Algorithm Development}
    The following Theorem derives an upper and lower bounds to the regret and implies the structure of the Arimoto-Blahut extension algorithm.
    \begin{thm}	    
        \label{thmArimotoBlahut}
        The min-max regret of the batch learning under the misspecification setting holds the following for any $\Phi$, $\Theta$ and $\pi(\phi)$:		
        \begin{align}									
            \begin{aligned}	
                \label{ArimotoBlahutRegret}
                R_L \leq R^*_N(\Theta,\Phi) \leq R_U 
              \end{aligned}
        \end{align}           
        where,
        \begin{align}
            \begin{aligned}
                R_L \equiv E_{\pi(\phi)}\left\{ D_{c,N}(P_\phi \| Q_{\pi}) -  D_{c,N}(P_\phi \| \Theta) \right\}
            \end{aligned}
        \end{align}
        and 
        \begin{align}
            \begin{aligned}
                R_U \equiv \max_{P_{\phi}}\left( D_{c,N}(P_\phi \| Q_{\pi}) -  D_{c,N}(P_\phi 
        \| \Theta) \right).
            \end{aligned}
        \end{align}
    \end{thm}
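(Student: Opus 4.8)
The plan is to obtain both inequalities directly from the two equivalent characterizations of $R^*_N(\Theta,\Phi)$ derived inside the proof of Theorem \ref{ThmFirst}: the min-max form (\ref{minmaxRegretAsDiffOfDivergences}), $R^*_N(\Theta,\Phi) = \min_Q \max_{P_\phi \in \Phi}\left(D_{c,N}(P_\phi\|Q) - D_{c,N}(P_\phi\|\Theta)\right)$, and the max-min form (\ref{maxminRegret}), $R^*_N(\Theta,\Phi) = \max_{\pi(\phi)} \min_Q R_N(\pi(\phi),Q)$. The only auxiliary fact needed is that, for a \emph{fixed} prior $\pi(\phi)$, the $Q$ that minimizes $R_N(\pi(\phi),Q)$ is the mixture $Q_\pi$ of (\ref{condQ}); this is exactly the Lagrangian computation in the proof of Theorem \ref{ThmFirst}, which never used optimality of $\pi$, so it applies verbatim to an arbitrary $\pi(\phi)$.

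For the lower bound $R_L \le R^*_N(\Theta,\Phi)$, I would start from the max-min identity and drop the outer maximization, keeping the specific prior $\pi(\phi)$ appearing in the statement: $R^*_N(\Theta,\Phi) \ge \min_Q R_N(\pi(\phi),Q)$. Because the penalty $E_{\pi}\{D_{c,N}(P_\phi\|\Theta)\}$ does not depend on $Q$, this minimum is attained at $Q=Q_\pi$ and equals $E_{\pi}\{D_{c,N}(P_\phi\|Q_\pi)\} - E_{\pi}\{D_{c,N}(P_\phi\|\Theta)\} = E_{\pi}\{D_{c,N}(P_\phi\|Q_\pi) - D_{c,N}(P_\phi\|\Theta)\} = R_L$, using linearity of the operator $E_{\pi}\{\cdot\}$.

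For the upper bound $R^*_N(\Theta,\Phi) \le R_U$, I would use the min-max identity and replace the minimizing $Q$ by the particular feasible choice $Q=Q_\pi$, which can only increase the value: $R^*_N(\Theta,\Phi) \le \max_{P_\phi \in \Phi}\left(D_{c,N}(P_\phi\|Q_\pi) - D_{c,N}(P_\phi\|\Theta)\right)$. Since the maximization defining $R_U$ is over a family at least as large as $\Phi$, the right-hand side is bounded above by $R_U$, which yields (\ref{ArimotoBlahutRegret}).

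I do not expect a real obstacle here; the entire content is that a fixed $\pi$ is a feasible point for the outer $\max$ in the max-min form while its induced mixture $Q_\pi$ is a feasible point for the outer $\min$ in the min-max form. The one step deserving an explicit sentence is the re-use of the $Q$-minimization from Theorem \ref{ThmFirst} for a non-optimal prior, as noted above. It is also worth remarking (though not required by the statement) that at the capacity-achieving prior the saddle-point property of the convex-concave game forces $R_L = R_U = R^*_N(\Theta,\Phi)$, which is precisely what makes the gap $R_U - R_L$ a sound termination criterion for the Arimoto-Blahut extension.
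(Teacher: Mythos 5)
Your proposal is correct and follows essentially the same route as the paper: the lower bound comes from evaluating the (inner-minimized) objective at the given, generally suboptimal prior $\pi$ — which is exactly the paper's step of replacing the maximizing $\pi^*$ by $\pi$ in the closed-form expression $R^*_N(\Theta,\Phi)=\max_{\pi}E_{\pi}\{D_{c,N}(P_\phi\|Q_\pi)-D_{c,N}(P_\phi\|\Theta)\}$ — and the upper bound comes from substituting the feasible $Q_\pi$ into the outer minimization of the min-max definition. Your explicit remark that the $Q$-minimization from Theorem \ref{ThmFirst} applies verbatim to a non-optimal prior is a slight gain in rigor over the paper's presentation, but the argument is the same.
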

    \begin{proof}
    The regret can be written by the following:
    \begin{align}
        R^*_N(\Theta,\Phi) = \max_{\pi(\phi)}\left( E_{\pi(\phi)}\left\{ D_{c,N}(P_\phi \| Q_{\pi}) -  D_{c,N}(P_\phi \| \Theta) \right\} \right)
    \end{align}
    Let us denote by $\pi^*(\phi)$ the maximizing prior distribution of the regret. Therefore, $\forall \pi(\phi) \neq \pi^*(\phi)$ we get the following lower bound:
    \begin{align}
        \begin{aligned}                    
        R^*_N(\Theta,\Phi) & = E_{\pi^*(\phi)}\left\{ D_{c,N}(P_\phi \| Q_{\pi^*}) -  D_{c,N}(P_\phi \| \Theta) \right\} 
        \\& \geq E_{\pi(\phi)}\left\{ D_{c,N}(P_\phi \| Q_{\pi}) -  D_{c,N}(P_\phi \| \Theta) \right\} \end{aligned}         
    \end{align}                    
    On the other hand, by definition, we get the following upper bound:
    \begin{align}
        \begin{aligned}                    
        R^*_N(\Theta,\Phi) & = \min_{Q} \max_{P_{\phi} \in \Phi} \max_{P_{\theta} \in \Theta} R_N(P_{\phi},P_{\theta},Q) 
        \\& \leq \max_{P_{\phi} \in \Phi} \max_{P_{\theta} \in \Theta} R_N(P_{\phi},P_{\theta},Q_{\pi}) 
        \\& = \max_{P_{\phi}}\left( D_{c,N}(P_\phi \| Q_{\pi}) -  D_{c,N}(P_\phi \| \Theta) \right)
        \end{aligned}         
    \end{align}
    \end{proof}
    
    Using Theorem \ref{thmArimotoBlahut} we can extend the Arimoto-Blahut algorithm to the misspecified batch learning setting by the following \textbf{Algorithm} \textbf{\ref{ArimotoBlahutAlg}}. The inputs to the algorithm are the batch size $N$, an optimization parametr $\lambda$, a required convergence accuracy parameter $\epsilon$, the sets $\Phi$ and $\Theta$ and an initial prior distribution $\pi_0(\phi)$ (uniform distribution over the set $\Phi$ is a good common practice initialization). In the initialization stage we calculate the lower and upper bounds $R_L$ and $R_U$ respectively, under the initial prior $\pi_0(\phi)$. Then an iterative procedure is applied over $\pi_i(\phi)$ until the algorithm is converge, i.e., $R_U-R_L\leq\epsilon$. the output of the algorithm is the prior distribution $\pi(\phi)$ and the resultant regret can be evaluated also as $R_N^*(\Theta,\Phi) \approx \frac{(R_L+R_U)}{2}$.

    It is interesting to mention that the second divergence term of the algorithm (and the regret) $D_{c,N}(P_\phi \| \Theta)$ decreases exponentially the weight of the prior distribution $\pi(\phi)$ outside of the set $\Theta$, as implies by Theorem \ref{Thm2}.
    
    \begin{algorithm}        
    \caption{Arimoto-Blahut algorithm for Misspecified Batch Learning}\label{ArimotoBlahutAlg}
    \begin{algorithmic}
    \State \textbf{Input}: $N, \lambda, \epsilon, \Phi=\{\phi_m\}_{m=1}^{M_\phi}, \Theta=\{\theta_m\}_{m=1}^{M_\theta}, \pi_0(\phi)$
    \State \textbf{Output}: $\pi(\phi)$
    \State \textbf{Initialization}:
    \State $i \gets 0$
    \State $R_{L} = E_{\pi_0(\phi)}\left\{ D_{c,N}(P_\phi \| Q_{\pi_0}) -  D_{c,N}(P_\phi \| \Theta) \right\}$
    \State $R_{U} = \max_{\phi}\left( D_{c,N}(P_\phi \| Q_{\pi_0}) -  D_{c,N}(P_\phi \| \Theta) \right)$
    \State \textbf{Loop}:
    \begin{algorithmic}
    \While {$R_{U} - R_{L} > \epsilon$} 
            \State $i \gets i+1$
            \State $\tilde{\pi}(\phi_j)_{i+1} = \pi_i(\phi_j) \cdot 
            e^{\lambda \left(D_{c,N}(P_{\phi_j} \| Q_{\pi_i}) - D_{c,N}(P_{\phi_j} \| \Theta) \right)}$
            \State $\pi(\phi_j)_{i+1} = \frac{\tilde{\pi}(\phi_j)_{i+1}}{ \sum_{j'=0}^{M_{\phi}} \tilde{\pi}(\phi_{j'})_{i+1}}$
            \State $R_{L} = E_{\pi_{i+1}(\phi)}\left\{ D_{c,N}(P_\phi \| Q_{\pi_{i+1}}) -  D_{c,N}(P_\phi \| \Theta) \right\}$
            \State $R_{U} = \max_{\phi}\left( D_{c,N}(P_\phi \| Q_{\pi_0}) -  D_{c,N}(P_\phi \| \Theta) \right)$ 
    \EndWhile
    \State \textbf{end}
    \State \textbf{Return}: $\pi(\phi)$
    \end{algorithmic}
    \end{algorithmic}
    \end{algorithm}

    \subsection{Numerical Results}
    To demonstrate the Arimoto-Blahut algorithm extension we apply the Algorithm over the Bernoulli distributions where $y\in \{0,1\}$, $\Phi$ is the set of all the distributions such that $\phi \in [\phi_{min}, \phi_{max}]$, and the hypotheses set $\Theta$ is given by $\theta \in [a,b] \subset{\Phi}$, such that $0 \leq \phi_{\min} \leq a \leq b \leq \phi_{\max} \leq 1$.

    The numerical results of the regret in few interesting settings of $\Phi$ and $\Theta$ are summarized in Table \ref{NumericalResultsTable}. %Appendix \ref{Numerical Results Figures} collects also the derived numerical figures of the capacity prior distribution $\pi(\phi)$, the universal learner $Q_\pi(y_N | y^{N-1})$ and the bias factor $\beta(P_{emp})$, that will be defined and analyzed in section \ref{Bias Factor Analysis}, of these cases.
    
    \begin{table}[h!]
    \centering
     \begin{tabular}{||c c c c||} 
     \hline
     $\Phi$ & $\Theta$ & N & Regret \\ [0.5ex] 
     \hline\hline
     $[0,1]$ & $[\nicefrac{1}{4},\nicefrac{1}{2}]$ & $10^2$ & $\nicefrac{0.7242}{2N}$ \\ 

     $[\nicefrac{1}{4}-\delta,\nicefrac{3}{4}+\delta]$ & $[\nicefrac{1}{4}-\delta,\nicefrac{3}{4}+\delta]$ & $10^2$ & $\nicefrac{0.9171}{2N}$\\
     
     $[0,1]$ & $[\nicefrac{1}{4},\nicefrac{3}{4}]$ & $10^2$ & $\nicefrac{0.8728}{2N}$ \\ 
     
     $[\nicefrac{1}{4},\nicefrac{3}{4}]$ & $[\nicefrac{1}{4},\nicefrac{3}{4}]$ & $10^2$ & $\nicefrac{0.8710}{2N}$ \\     
     
     $[0,1]$ & $[\nicefrac{1}{3},\nicefrac{2}{3}]$ & $10^2$ & $\nicefrac{0.7869}{2N}$\\ 
     
     $[\nicefrac{1}{3},\nicefrac{2}{3}]$ & $[\nicefrac{1}{3},\nicefrac{2}{3}]$ & $10^2$ & $\nicefrac{0.7828}{2N}$\\      
     
     $[0,1]$ & $[\nicefrac{1}{100},\nicefrac{99}{100}]$ & $10^2$ & $\nicefrac{0.9766}{2N}$ \\
     
     $[\nicefrac{1}{100},\nicefrac{99}{100}]$ & $[\nicefrac{1}{100},\nicefrac{99}{100}]$ & $10^2$ & $\nicefrac{0.9763}{2N}$ \\ 
     
     $[0,1]$ & $[0,1]$ & $10^2$ & $\nicefrac{0.9908}{2N}$ \\
     
     $[0,1]$ & $[\nicefrac{1}{4},\nicefrac{1}{2}]$ & $10^3$ & $\nicefrac{0.9334}{2N}$ \\ 

     $[\nicefrac{1}{4}-\delta,\nicefrac{3}{4}+\delta]$ & $[\nicefrac{1}{4}-\delta,\nicefrac{3}{4}+\delta]$ & $10^3$ & $\nicefrac{0.9837}{2N}$\\
     
     $[0,1]$ & $[\nicefrac{1}{4},\nicefrac{3}{4}]$ & $10^3$ & $\nicefrac{0.9816}{2N}$\\ 
     
     $[\nicefrac{1}{4},\nicefrac{3}{4}]$ & $[\nicefrac{1}{4},\nicefrac{3}{4}]$ & $10^3$ & $\nicefrac{0.9798}{2N}$\\
     
     $[0,1]$ & $[\nicefrac{1}{100},\nicefrac{99}{100}]$ & $10^3$ & $\nicefrac{0.9970}{2N}$ \\
     
     $[\nicefrac{1}{100},\nicefrac{99}{100}]$ & $[\nicefrac{1}{100},\nicefrac{99}{100}]$ & $10^3$ & $\nicefrac{0.9970}{2N}$ \\
     
     $[0,1]$ & $[0,1]$ & $10^3$ & $\nicefrac{1.0027}{2N}$ \\ [1ex] 
     \hline
     \end{tabular}          
    \caption{Arimoto-Blahut Numerical Results Summary}
    \label{NumericalResultsTable}
    \end{table}

    \subsubsection{Stochastic vs. Misspecified Settings Example}
    Let us focus on the following three settings, where $N=10^3$:
    \begin{enumerate}[label=(\alph*)]
        \item Stochastic setting: $\tilde{\Phi}=\Theta = [\nicefrac{1}{4},\nicefrac{3}{4}]$, i.e., $a=\nicefrac{1}{4}$ and $b=\nicefrac{3}{4}$.
        \item Misspecified setting: $\Phi = [0,1]$ and $\Theta = [\nicefrac{1}{4},\nicefrac{3}{4}]$.
        \item Stochastic setting: $\tilde{\Phi}=\Theta_{\epsilon_N} = [\nicefrac{1}{4}-\delta,\nicefrac{3}{4}+\delta]$, where $\epsilon_N = {N^{\alpha-1}}$, $\alpha = 0.1$ and $\delta = \sqrt{2{a(1-a){\epsilon_N}}} = \sqrt{2{b(1-b){\epsilon_N}}} \approx 0.0274$. Note that $\Theta_{\epsilon_N} = \{ P_\phi \in \Phi : D_{c,N}( P_\phi, \Theta ) < \epsilon_N \}$ for $N=10^3$.
    \end{enumerate}
    The regrets of these three settings hold the following inequality as expected by Theorem \ref{Thm2}:
    \begin{align}
        \begin{aligned}\label{Thm2Demonstration}
            C_{c,N}(\Theta) \equiv \underbrace{R_N^*(\Theta)}_{\frac{0.9798}{2N}} < \underbrace{R_N^*(\Theta,\Phi)}_{\frac{0.9816}{2N}} < \underbrace{R_N^*(\Theta_{\epsilon_N})}_{\frac{0.9837}{2N}} \equiv C_{c,N}(\Theta_{\epsilon_N})
        \end{aligned}
    \end{align}
    %It can be shown in (\ref{Thm2Demonstration}) that $R_N^*(\Theta,[0,1]) \geq R_N^*(\Theta,\Theta)$ as expected.% 
    In addition, we get numerically that $R_N^*(\Theta,\Phi) - C_{c,N}(\Theta) \approx \frac{0.0018}{2N}$ (both for $N=10^2$ and $N=10^3$).
    This fact and (\ref{Thm2Demonstration}) implies that there is an $\epsilon_N \to 0$ such that $C_{c,N}(\Theta_{\epsilon_N}) \to C_{c,N}(\Theta)$ and $R_N^*(\Theta,\Phi) \approx C_{c,N}(\Theta)$.
    Another aspect that demonstrates this result is the similarity between the capacity achieving prior distribution $\pi(\phi)$ of these examples, as can be seen in Figure \ref{FigurePriorTheta_025_075_N1000}. Note that the prior distribution for the setting $\tilde{\Phi} = \Theta$ is zeroed outside the region of $\Theta = [\nicefrac{1}{4},\nicefrac{3}{4}]$, while it decays rapidly outside this region as expected where $\Phi = [0,1]$.

    %\begin{wrapfigure}{l}{0.5\textwidth}
    \begin{figure}[h]
        \centering
        \includegraphics[width=0.5\textwidth]{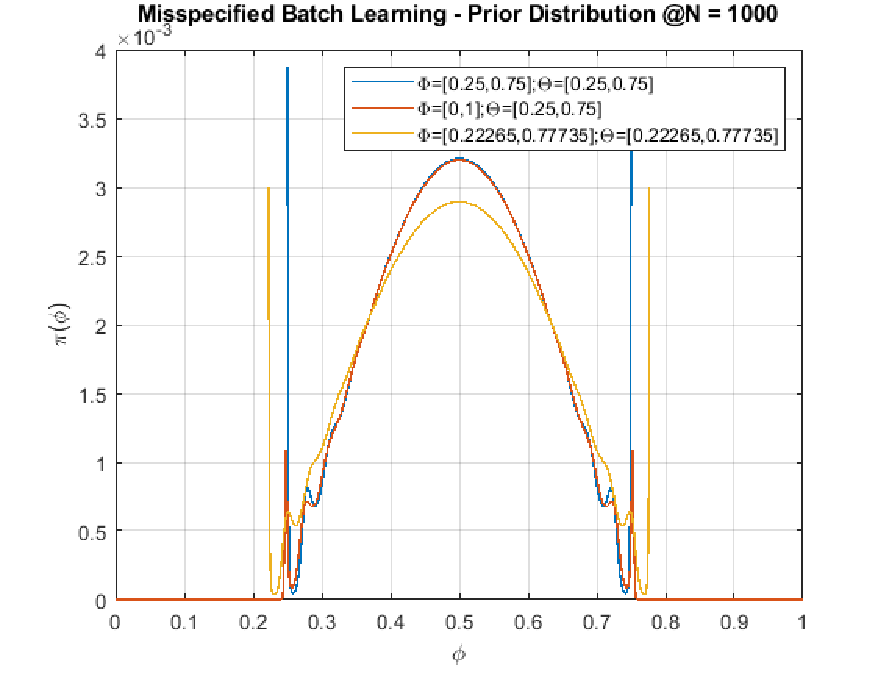}
        \caption{Stochastic vs. Misspecified Capacity Achieving Prior distribution for settings: (a), (b) and (c), where $N=10^3.$}
        \label{FigurePriorTheta_025_075_N1000}
    \end{figure}
    %\end{wrapfigure}

    A similar result for $N=10^2$ and $\delta = \sqrt{2{a(1-a){\epsilon_N}}} = \sqrt{2{b(1-b){\epsilon_N}}} \approx 0.0771$ can be found in Table \ref{NumericalResultsTable}.
    
    \subsubsection{Add-$\beta$ Factor Analysis}\label{Bias Factor Analysis}
    Another interesting term to be analyzed is the add-$\beta$ factor as function of the empirical distribution $P_{emp} \equiv \frac{\sum_{n=1}^{N-1}{y_n}}{N-1}$ of the $N-1$ training samples. Note that $P_{emp}$ is the sufficient statistic of the universal predictor $Q_{\pi}(y_N=1 | y^{N-1})$. Hence, we can get the following:
    \begin{align}
        \begin{aligned}
            Q_{\pi}(y_N=1 | y^{N-1}) &= Q_{\pi}\left(y_N=1 | \sum_{n=1}^{N-1}{y_n}\right) 
            \\&= Q_{\pi}\left(y_N=1 | P_{emp}\right)  
            \\&= \frac{P_{emp}(N-1) + \beta}{N-1+2\beta}
        \end{aligned}
    \end{align}
    and by simple algebraic manipulations we get:
    \begin{align}
        \begin{aligned}
            \beta(P_{emp}) = (N-1)\frac{Q_{\pi}(y_N=1 | P_{emp}) - P_{emp}}{1-2Q_{\pi}(y_N=1 | P_{emp})}.
        \end{aligned}
    \end{align}    
    In Figure \ref{FigureBeta_Theta_001_099_N100} we demonstrate the bias factor $\beta(P_{emp})$ in the following settings:
    \begin{enumerate}[label=(\alph*)]
        \item Misspecified setting, where $\Theta = [\nicefrac{1}{100},\nicefrac{99}{100}]$ and $\Phi = [0,1]$.
        \item Stochastic setting, where $\Phi=\Theta=[\nicefrac{1}{100},\nicefrac{99}{100}]$.
        \item Stochastic setting, where $\Phi=\Theta=[0,1]$.
    \end{enumerate}
    \begin{figure}[h]
        \centering
        \includegraphics[width=0.5\textwidth]{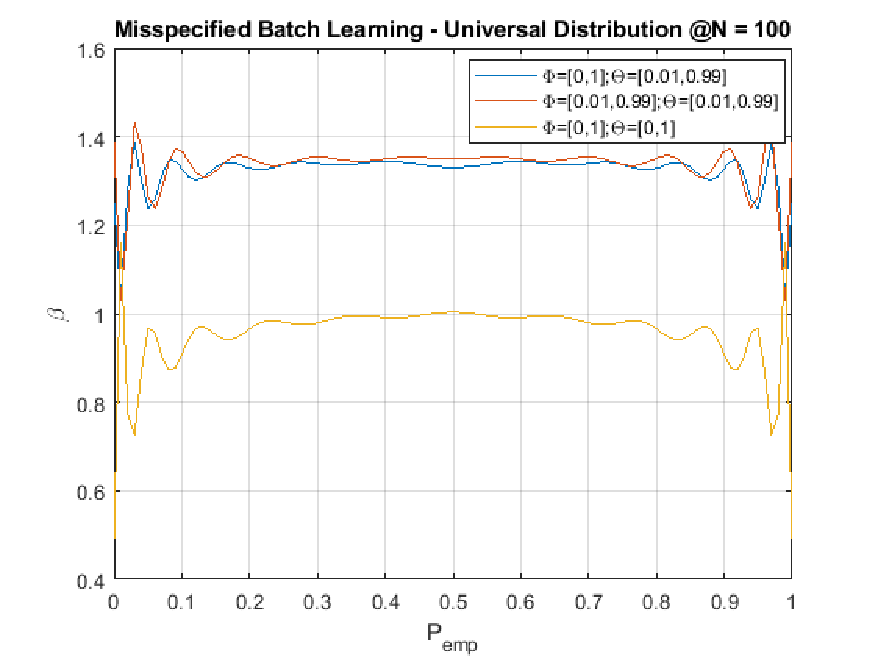}
        \caption{{Stochastic vs. Misspecified $\beta$ bias factor for settings: (a), (b)  and (c), where $N=10^2$}.}
        \label{FigureBeta_Theta_001_099_N100}
    \end{figure}

    It can be seen that settings (a) and (b) have a similar add-$\beta$ factor that toggles approximately around $\sim1.3$, while in settings (c) the add-$\beta$ factor is lower than (a) and (b) settings and toggles approximately around $\sim1$, i.e., the add-$\beta$ factor is governed by the set $\Theta$.
    Note that in the stochastic online setting, where $\Phi=[0,1]$, the add-$\beta$ is a constant factor and is given asymptotically by $\nicefrac{1}{2}$, see \cite{UniversalPrediction}. 
    
    Another interesting comparison is the extreme case where $P_{emp}=0$. In the misspecified setting (a), $\beta(a) \approx 1.25$, and in the stochastic setting (b) and (c), $\beta(b) \approx 1.38$ and $\beta(c) \approx 0.49$, respectively.
    
    \section{Misspecified Universal Batch Learning Extensions}\label{Misspecified Universal Batch Learning Extensions}
    \subsection{Combined Batch and Online Learning}    
    Let us extend the basic problem statement to the setting of universal prediction of $L$ data outcomes, denoted by $y^L = y_{N+1},y_{N+2},\dots,y^{N+L}$, given $N$ training data samples, denoted by $y^N$, under the misspecification setting. Note that a similar extension in the stochastic setting was analyzed in \cite{FogelFederCalculation}.
    
    For doing so, let us define the min-max regret as follows:
    \begin{align}
        \begin{aligned}
            R_{N,L}^*(\Theta,\Phi) = \min_{Q} \max_{P_{\phi} \in \Phi} \max_{P_{\theta} \in \Theta} \left( \frac{1}{L} \sum_{y^{N+L}}P_{\phi}(y^{N+L})\log \left({\frac{P_{\theta}(y^{L}|y^{N})}{Q(y^L|y^{N})}}\right) \right),
        \end{aligned}
    \end{align}
    and the conditional divergence between a probability distribution $P_\phi \in \Phi$ to the set of hypotheses $\Theta$ by the following:
    \begin{align}
        \begin{aligned}
            D_{c,N,L}(P_{\phi} \| \Theta) \equiv \min_{P_{\theta} \in \Theta}\underbrace{D \left( P_{\phi}\left(Y^{L} | Y^N\right) \| P_{\theta}\left( Y^{L} | Y^N \right) \right)}_{D_{c,N,L}(P_\phi \| P_\theta)}
        \end{aligned}
    \end{align}

    \subsubsection{Min-Max Regret Derivation}
    Following the same steps as in the misspecified batch learning setting derivations, and the above definitions we can get the following Theorem:
    \begin{thm}\label{thmCombinedSetting}
       The min-max regret of the problem defined above is given by:		
        \begin{align}
            \begin{aligned}
                R_{N,L}^*(\Theta,\Phi) = \max_{\pi(\phi)} \left( \frac{1}{L} \left( I\left(Y^L ; \Phi | Y^N\right) -E_{\pi(\phi)}\left\{ D_{c,N,L}(P_{\phi} \| \Theta) \right\} \right) \right)
            \end{aligned}
        \end{align}
        and the universal distribution for a given $\pi(\phi)$ is given by:
        \begin{align}
            \begin{aligned}\label{CombinedUniversalDistribution}
                Q_{\pi}\left( y^{L} | y^{N} \right) = \frac{\int{\pi(\phi)P_{\phi}(y^{N+L})d\phi}}{\int{\pi(\phi)P_{\phi}(y^{N})d\phi}}.
            \end{aligned}
        \end{align} 
    \end{thm}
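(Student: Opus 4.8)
The plan is to follow the same chain of steps as in the proof of Theorem \ref{ThmFirst}, replacing the single outcome $y_N$ by the block $y^L=(y_{N+1},\dots,y_{N+L})$ and carrying the harmless constant $1/L$ throughout. First I would rewrite the per-block regret $\frac{1}{L}\sum_{y^{N+L}}P_{\phi}(y^{N+L})\log\frac{P_{\theta}(y^L|y^N)}{Q(y^L|y^N)}$ by inserting $P_{\phi}(y^L|y^N)$ into numerator and denominator, obtaining $\frac{1}{L}\left(D_{c,N,L}(P_{\phi}\|Q)-D_{c,N,L}(P_{\phi}\|P_{\theta})\right)$. Since the first term is independent of $\theta$, maximizing over $P_{\theta}\in\Theta$ (equivalently minimizing the subtracted divergence) turns the second term into $D_{c,N,L}(P_{\phi}\|\Theta)$, so that
\begin{align}
\begin{aligned}
R_{N,L}^*(\Theta,\Phi)=\min_{Q}\max_{P_{\phi}\in\Phi}\frac{1}{L}\left(D_{c,N,L}(P_{\phi}\|Q)-D_{c,N,L}(P_{\phi}\|\Theta)\right).
\end{aligned}
\end{align}

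Next I would lift the inner maximization over $P_{\phi}$ to a maximization over priors $\pi(\phi)$, writing the objective as $R_{N,L}(\pi,Q)=\frac{1}{L}\left(E_{\pi}\{D_{c,N,L}(P_{\phi}\|Q)\}-E_{\pi}\{D_{c,N,L}(P_{\phi}\|\Theta)\}\right)$. Exactly as in Theorem \ref{ThmFirst}, this is convex in $Q$ (the first term is proportional to $-\log Q$, the second does not depend on $Q$) and affine, hence concave, in $\pi$, so Sion's minimax theorem applies and gives $R_{N,L}^*(\Theta,\Phi)=\max_{\pi}\min_{Q}R_{N,L}(\pi,Q)$. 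Then, for a fixed $\pi$, I would minimize over $Q$ via a Lagrangian enforcing $\sum_{y^L}Q(y^L|y^N)=1$ for every conditioning block $y^N$; zeroing the derivative yields $Q_{\pi}(y^L|y^N)\propto\int\pi(\phi)P_{\phi}(y^{N+L})\,d\phi$, and normalizing over $y^L$ — which collapses $\sum_{y^L}P_{\phi}(y^{N+L})=P_{\phi}(y^N)$ by marginal consistency of the process — fixes the multiplier to $\int\pi(\phi)P_{\phi}(y^N)\,d\phi$, giving (\ref{CombinedUniversalDistribution}), i.e. the mixture $Q_{\pi}(y^{N+L})=\int\pi(\phi)P_{\phi}(y^{N+L})\,d\phi$.

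Finally I would substitute this $Q_{\pi}$ back into the first term: with $P(y^{N+L},\phi)\equiv\pi(\phi)P_{\phi}(y^{N+L})$, the identity $E_{\pi}\{D_{c,N,L}(P_{\phi}\|Q_{\pi})\}=I(Y^L;\Phi|Y^N)$ follows exactly as in (\ref{FirstTerm}), now in conditional block form, and combining the pieces (together with the $1/L$ factor) produces the claimed expression for $R_{N,L}^*(\Theta,\Phi)$. I expect no real obstacle here: the convex–concave argument and the Sion's-theorem step transfer verbatim, and the only genuinely new bookkeeping is checking that the normalization is per block $y^N$ and that the telescoping sum $\sum_{y^L}P_{\phi}(y^{N+L})=P_{\phi}(y^N)$ and the conditional mutual-information identity go through for blocks rather than single symbols — all routine.
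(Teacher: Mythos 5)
Your proof is correct and is exactly the route the paper intends: the paper gives no detailed argument for this theorem, stating only that it follows ``the same steps as in the misspecified batch learning setting derivations,'' i.e.\ the proof of Theorem~\ref{ThmFirst} with $y_N$ replaced by the block $y^L$ and the factor $1/L$ carried along. Your block-level bookkeeping (per-$y^N$ normalization, the marginalization $\sum_{y^L}P_\phi(y^{N+L})=P_\phi(y^N)$, and the conditional mutual-information identity) fills in precisely the details the paper omits.
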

    The form of the min-max regret, as given by Theorem \ref{thmCombinedSetting} for the above problem setting, is similar to the regret of the combined batch and online prediction under the stochastic setting, see \cite{FogelFederCalculation}. Clearly, for $L=1$, the result coincides with Theorem \ref{ThmFirst}, i.e., the min-max regret of the misspecified batch learning problem.

    \subsubsection{Bounding the Min-Max Regret}
    Now we turn to derive upper and lower bounds for the misspecified combined batch and online regret. The bounds can be interpreted as a generalization of Theorem \ref{Thm2} and Theorem 4 in \cite{SequentialMisspecification} bounds for the misspecified batch and misspecified online learning regrets respectively.
    \begin{thm}\label{thmCombinedSettingBounds}
       Suppose $\Theta \subseteq \Phi$ are sets of distributions s.t. the data samples are i.i.d and $C_{c,M}(\Phi) \equiv \tau_M \to 0$. Then for every set of $\epsilon_{N+t} \gg \tau_{N+t}$ for $t=1,2,\dots,L$ we have
        \begin{align}
            \begin{aligned}
                 R_{N,L}^*(\Theta) \leq R_{N,L}^*(\Theta,\Phi) \leq \frac{1}{L}\sum_{t=1}^LC_{c,N+t}\left( \Theta_{\epsilon_{N+t}} \right) + o(1)
            \end{aligned}
        \end{align}
        where $R_{N,L}^*(\Theta)$ is the combined batch and online regret under the stochastic setting and $\Theta_{\epsilon} \equiv \{P_{\phi} \in \Phi: D_{c}(P_\phi||\Theta) < \epsilon \}$.
    \end{thm}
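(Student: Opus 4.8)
The plan is to reduce the combined regret to an average of single-step regrets and then apply Theorem \ref{Thm2} block by block. By Theorem \ref{thmCombinedSetting} we may write $R_{N,L}^*(\Theta,\Phi) = \max_{\pi(\phi)} J_{N,L}(\pi)$ with $J_{N,L}(\pi) \equiv \frac{1}{L}\left( I(Y^L;\Phi|Y^N) - E_{\pi}\{D_{c,N,L}(P_\phi\|\Theta)\}\right)$. The lower bound $R_{N,L}^*(\Theta) \le R_{N,L}^*(\Theta,\Phi)$ is immediate from the definition: for every fixed predictor $Q$, enlarging the adversary's feasible set from $\Theta$ to $\Phi \supseteq \Theta$ can only increase the inner double maximum, and this monotonicity is preserved by the outer minimization over $Q$; the resulting quantity is $R_{N,L}^*(\Theta,\Theta) = R_{N,L}^*(\Theta)$.

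For the upper bound I would first use the i.i.d.\ hypothesis to split $J_{N,L}$ into single-step pieces. The mutual information chain rule gives $I(Y^L;\Phi|Y^N) = \sum_{t=1}^{L} I(Y_{N+t};\Phi|Y^{N+t-1})$. For the penalty term, i.i.d.\ data make $P_\phi(Y^L|Y^N) = \prod_{t=1}^{L} P_\phi(Y_{N+t})$, so $D_{c,N,L}(P_\phi\|P_\theta) = \sum_{t=1}^{L} D_{c,N+t}(P_\phi\|P_\theta)$, and because every per-block term is the same single-letter divergence $D(P_\phi\|P_\theta)$ the infimum over $\theta$ is attained at a common $\theta$ and hence commutes with the sum, yielding $D_{c,N,L}(P_\phi\|\Theta) = \sum_{t=1}^{L} D_{c,N+t}(P_\phi\|\Theta)$. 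Therefore $J_{N,L}(\pi) = \frac{1}{L}\sum_{t=1}^{L} J_{N+t}(\pi)$, where $J_{N+t}(\pi) \equiv I(Y_{N+t};\Phi|Y^{N+t-1}) - E_{\pi}\{D_{c,N+t}(P_\phi\|\Theta)\}$ is exactly the functional appearing in Theorem \ref{ThmFirst} with batch size $N+t$.

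Now, bounding the maximum of a sum by the sum of the maxima, $R_{N,L}^*(\Theta,\Phi) = \max_{\pi}\frac{1}{L}\sum_{t=1}^{L} J_{N+t}(\pi) \le \frac{1}{L}\sum_{t=1}^{L} \max_{\pi} J_{N+t}(\pi) = \frac{1}{L}\sum_{t=1}^{L} R_{N+t}^*(\Theta,\Phi)$, the last equality by Theorem \ref{ThmFirst}. Applying Theorem \ref{Thm2} to each term with $\epsilon_{N+t} \gg \tau_{N+t} = C_{c,N+t}(\Phi)$ (legitimate since $C_{c,N+t}(\Phi) \to 0$ as $N\to\infty$) gives $R_{N+t}^*(\Theta,\Phi) \le C_{c,N+t}(\Theta_{\epsilon_{N+t}}) + o(1)$; averaging over $t$ and collecting the error terms yields the claimed bound. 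Note the i.i.d.\ assumption also makes $D_{c,N+t}(P_\phi\|\Theta)$ independent of $t$, so the sets $\Theta_{\epsilon_{N+t}}$ are consistent with the definition in the statement.

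The main obstacle I anticipate is the divergence decomposition $D_{c,N,L}(P_\phi\|\Theta) = \sum_{t=1}^{L} D_{c,N+t}(P_\phi\|\Theta)$: this is precisely where i.i.d.\ is genuinely needed, since in general $\inf_\theta$ does not distribute over a sum of per-coordinate divergences, and one must argue that a single minimizing $\theta$ serves all blocks. A secondary point is the $o(1)$ bookkeeping: each of the $L$ per-step applications of Theorem \ref{Thm2} contributes its own error $A(\lambda_{N+t})$ with $\lambda_{N+t} = C_{c,N+t}(\Phi)/\epsilon_{N+t} \to 0$, so one must check these average to $o(1)$ — trivial for fixed $L$, and otherwise requiring a uniform-in-$t$ form of the rate.
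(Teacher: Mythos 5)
Your proof is correct and follows essentially the same route as the paper: decompose the $L$-step regret into per-step terms (the paper factors $Q_\pi$ by the chain rule and uses the i.i.d.\ structure of $P_\theta(y^L|y^N)=\prod_t P_\theta(y_{N+t})$, which is equivalent to your chain rule for $I(Y^L;\Phi|Y^N)$ plus divergence additivity), bound the maximum of the sum by the sum of the per-step maxima $R^*_{N+t}(\Theta,\Phi)$, apply Theorem \ref{Thm2} term by term, and obtain the lower bound trivially from $\Theta\subseteq\Phi$. One remark: the ``main obstacle'' you flag is not actually needed, since for the upper bound it suffices that $D_{c,N,L}(P_\phi\|\Theta)=\inf_\theta\sum_t D_{c,N+t}(P_\phi\|P_\theta)\ge\sum_t D_{c,N+t}(P_\phi\|\Theta)$, which holds automatically without a common minimizer.
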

    \begin{proof}
        The basis of the proof is given by applying the chain rule to the universal distribution, defined in (\ref{CombinedUniversalDistribution}), by the following:
        \begin{align}
            \begin{aligned}
                Q_\pi(y^L | y^N) &= Q_\pi(y_{N+1} | y^{N}) \cdot Q_\pi(y_{N+2} | y^{N+1}) \cdots Q_\pi(y_{N+t} | y^{N+t-1}) \cdots Q_\pi(y_{N+L} | y^{N+L-1}) 
                \\&= \Pi_{t=1}^L Q_\pi(y_{N+t} | y^{N+t-1}) = \Pi_{t=1}^L \frac{\int{\pi(\phi)P_{\phi}(y^{N+t})d\phi}}{\int{\pi(\phi)P_{\phi}(y^{N+t-1})d\phi}}.
            \end{aligned}
        \end{align}
        Clearly, the regret holds the following:
        \begin{align}
            \begin{aligned}
                R_{N,L}^*(\Theta,\Phi) = \max_{\pi(\phi)} \frac{1}{L} \int{ \pi(\phi) \sum_{y^{N+L}} P_\phi(y^{N+L}) \log \left( \frac{P_\theta(y^L | y^N)}{Q_\pi(y^L | y^N)} \right) }d\phi,
            \end{aligned}
        \end{align}
        and according to the Theorem assumptions the set of hypotheses assumes data samples are i.i.d, i.e., $P_\theta(y^L | y^N) = \Pi_{t=1}^L P_\theta(y_{N+t})$. Therefore, by combining all the above we get the following:
        \begin{align}
            \begin{aligned}
                R_{N,L}^*(\Theta,\Phi) &= \max_{\pi(\phi)} \frac{1}{L} \int{ \pi(\phi) \sum_{y^{N+L}} P_\phi(y^{N+L}) \log \left( \frac{\Pi_{t=1}^LP_\theta(y_{N+t})}{\Pi_{t=1}^LQ_\pi(y_{N+t} | y^{N+t-1})} \right) }d\phi
                \\& = \max_{\pi(\phi)} \frac{1}{L} \sum_{t=1}^L \int{ \pi(\phi) \sum_{y^{N+t}} P_\phi(y^{N+t}) \log \left( \frac{P_\theta(y_{N+t})}{Q_\pi(y_{N+t} | y^{N+t-1})} \right) }d\phi
                \\& \leq \frac{1}{L} \sum_{t=1}^L \max_{\pi(\phi)} \int{ \pi(\phi) \sum_{y^{N+t}} P_\phi(y^{N+t}) \log \left( \frac{P_\theta(y_{N+t})}{Q_\pi(y_{N+t} | y^{N+t-1})} \right) }d\phi 
                \\& \leq \frac{1}{L} \sum_{t=1}^L R^*_{N+t}(\Theta,\Phi) 
                \leq \frac{1}{L}\sum_{t=1}^LC_{c,N+t}\left( \Theta_{\epsilon_{N+t}} \right) + o(1).
            \end{aligned}
        \end{align}
        The first inequality follows trivially due to the fact that summation over maximized terms is greater than the maximization of the summed terms. The second inequality is given by the misspecified batch learning definition, and the last inequality is given by Theorem \ref{Thm2}.

        The lower bound of the misspecified combined batch and online regret is given trivially by definition, i.e., by taking $\Phi \equiv \Theta$, which gives us the combined batch and online regret of the stochastic setting.
    \end{proof}

    Assuming a set of hypotheses $\Theta$ such that $\epsilon_N \to 0$ and $\pi_N(\theta) \to \pi^*(\theta)$, i.e., the capacity achieving prior distribution is not a function of $N$ for large enough $N$, then the following holds: 
    \begin{align}
        \begin{aligned}
            \frac{1}{L}\sum_{t=1}^LC_{c,N+t}\left( \Theta_{\epsilon_{N+t}} \right) &= \frac{1}{L}\sum_{t=1}^L \max_{\pi^*(\theta)} I(Y_{N+t};\Theta|Y^{N+t-1}) + o(1)
            \\&= \max_{\pi^*(\theta)} \frac{1}{L} \sum_{t=1}^L I(Y_{N+t};\Theta|Y^{N+t-1}) + o(1)
            \\&= \max_{\pi^*(\theta)} \frac{1}{L} I(Y^{L};\Theta|Y^{N}) + o(1)
            \\&= R_{N,L}^*(\Theta) + o(1).
        \end{aligned}
    \end{align}    
    Summarizing all the above, we get the following:
    \begin{align}
        \begin{aligned}
            \label{combinedMissRegert}
            R_{N,L}^*(\Theta) \leq R_{N,L}^*(\Theta,\Phi) \leq R_{N,L}^*(\Theta) + o(1).
        \end{aligned}
    \end{align}
    Namely, the misspecified combined batch and online learning regret equals approximately to the regret of the batch and online learning under the stochastic setting over the set of hypotheses $\Theta$.
    This result generalizes the result of Theorem \ref{Thm2} to the misspecified combine batch and online learning problem.
    Note that in \cite{FogelFederCalculation} the combined batch and online learning under the stochastic setting over the multinomial of order $m$ set of hypotheses was analyzed. It has been shown that for $N \gg L \gg 1$ the combined regret converges to the regret of the batch learning and $R_{N,L}(\Theta)^* \approx \frac{m-1}{2N}$. In addition, in the opposite case where $L \gg N \gg 1$ it has been shown that the combined regret converges to the online learning regret and $R_{N,L}(\Theta)^* \approx \frac{m-1}{2L}\log(L)$.
    According to (\ref{combinedMissRegert}) these extreme cases also hold in the misspecified combined batch and online learning setting.

    \subsubsection{Arimoto Blahut Algorithm Extension}
    Following the same steps as in section (\ref{Arimoto Blahut Algorithm Extension}) for the Arimoto-Blahut algorithm extension development of the misspecified batch learning problem we can extend the algorithm to the the misspecified combined batch and online learning setting.
    The algorithm stopping rule is based on the following upper and lower bounds:
    \begin{cor}
        The min-max regret of the combined batch and online learning under the misspecification setting holds the following for any $\Phi$, $\Theta$ and $\pi(\phi)$:
        \begin{align}
            \begin{aligned}
                R_L \leq R^*_{N,L} \leq R_U
            \end{aligned}
        \end{align}
        where,
        \begin{align}
            \begin{aligned}
                R_L \equiv E_{\pi(\phi)}\{ D_{c,N,L}(P_\phi \| Q_\pi) - D_{c,N,L}(P_\phi \| \Theta) \}
            \end{aligned}
        \end{align}
        and
        \begin{align}
            \begin{aligned}
                R_U \equiv \max_{P_\phi} ( D_{c,N,L}(P_\phi \| Q_\pi) - D_{c,N,L}(P_\phi \| \Theta) ).
            \end{aligned}
        \end{align}
    \end{cor}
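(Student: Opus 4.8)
The plan is to mirror the proof of Theorem~\ref{thmArimotoBlahut} almost verbatim, replacing the single-outcome conditional divergence $D_{c,N}(\cdot\|\cdot)$ by the normalized block divergence $\frac{1}{L}D_{c,N,L}(\cdot\|\cdot)$ and the mutual information $I(Y_N;\Phi|Y^{N-1})$ by $\frac{1}{L}I(Y^L;\Phi|Y^N)$, using Theorem~\ref{thmCombinedSetting} as the starting point (so the $1/L$ below is the same normalization that already appears there).

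First I would record the block analogue of the decomposition used in Theorem~\ref{ThmFirst}: for any $P_\phi\in\Phi$, $P_\theta\in\Theta$ and predictive $Q$, adding and subtracting $\log P_\phi(y^L|y^N)$ inside the regret gives $\frac{1}{L}\sum_{y^{N+L}}P_\phi(y^{N+L})\log\frac{P_\theta(y^L|y^N)}{Q(y^L|y^N)} = \frac{1}{L}\left(D_{c,N,L}(P_\phi\|Q) - D_{c,N,L}(P_\phi\|P_\theta)\right)$, and minimizing the second term over $\Theta$ turns it into $\frac{1}{L}D_{c,N,L}(P_\phi\|\Theta)$. I would then note the block version of~(\ref{FirstTerm}): since $Q_\pi(y^L|y^N)$ in~(\ref{CombinedUniversalDistribution}) is exactly the Bayes mixture conditional $P(y^L|y^N)$ under the joint $P(y^{N+L},\phi)=\pi(\phi)P_\phi(y^{N+L})$, one has $\frac{1}{L}E_{\pi(\phi)}\{D_{c,N,L}(P_\phi\|Q_\pi)\} = \frac{1}{L}I(Y^L;\Phi|Y^N)$. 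Combining this identity with Theorem~\ref{thmCombinedSetting} rewrites the regret as $R^*_{N,L}(\Theta,\Phi) = \max_{\pi(\phi)}\frac{1}{L}E_{\pi(\phi)}\{D_{c,N,L}(P_\phi\|Q_\pi) - D_{c,N,L}(P_\phi\|\Theta)\}$.

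The lower bound is then immediate: evaluating the objective of this maximization at the particular prior $\pi(\phi)$ appearing in the statement can only be no larger than the maximum, so $R^*_{N,L}\ge R_L$. For the upper bound I would return to the min-max definition of $R^*_{N,L}(\Theta,\Phi)$: since it is a minimum over $Q$, substituting the specific choice $Q=Q_\pi$ only increases the value, giving $R^*_{N,L}(\Theta,\Phi)\le \max_{P_\phi\in\Phi}\max_{P_\theta\in\Theta}\frac{1}{L}(D_{c,N,L}(P_\phi\|Q_\pi) - D_{c,N,L}(P_\phi\|P_\theta))$; pushing the inner maximization over $P_\theta$ through the minus sign converts $\max_{P_\theta}(-D_{c,N,L}(P_\phi\|P_\theta))$ into $-D_{c,N,L}(P_\phi\|\Theta)$, which is exactly $R_U$.

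There is essentially no deep obstacle; the statement is a routine generalization of Theorem~\ref{thmArimotoBlahut}. The one step deserving care — and the one I would write out explicitly — is the block mixture identity $\frac{1}{L}E_{\pi(\phi)}\{D_{c,N,L}(P_\phi\|Q_\pi)\} = \frac{1}{L}I(Y^L;\Phi|Y^N)$: one must verify that the conditional mixture $\frac{\int\pi(\phi)P_\phi(y^{N+L})\,d\phi}{\int\pi(\phi)P_\phi(y^{N})\,d\phi}$ really is the marginal $P(y^L|y^N)$ of the joint above, after which the computation is line-for-line identical to~(\ref{FirstTerm}). Everything else is bookkeeping inherited from Theorems~\ref{thmArimotoBlahut} and~\ref{thmCombinedSetting}.
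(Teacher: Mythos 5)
Your proposal is correct and takes essentially the same route the paper intends: the paper's own proof is the one-line remark that it is similar to the proof of Theorem~\ref{thmArimotoBlahut}, and your argument---lower bound by evaluating the $\max_{\pi(\phi)}$ representation obtained from Theorem~\ref{thmCombinedSetting} (via the block identity $E_{\pi(\phi)}\{D_{c,N,L}(P_\phi\|Q_\pi)\}=I(Y^L;\Phi|Y^N)$) at the given prior, upper bound by substituting $Q_\pi$ into the min-max definition and converting $\max_{P_\theta\in\Theta}\bigl(-D_{c,N,L}(P_\phi\|P_\theta)\bigr)$ into $-D_{c,N,L}(P_\phi\|\Theta)$---is exactly that adaptation. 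The only discrepancy is the $\tfrac{1}{L}$ normalization, which your bounds carry over from the definition of $R^*_{N,L}$ while the corollary's displayed $R_L$ and $R_U$ omit it; this is a notational slip in the paper's statement rather than a gap in your argument.
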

    \begin{proof}
        Similar to the proof of Theorem \ref{thmArimotoBlahut}.
    \end{proof}
    The pseudo-code of the algorithm is given in \textbf{Algorithm} \textbf{\ref{ArimotoBlahutAlgComb}}.
    
    \begin{algorithm}        
    \caption{Arimoto-Blahut alg. for Misspecified Batch and Online Learning}\label{ArimotoBlahutAlgComb}
    \begin{algorithmic}
    \State \textbf{Input}: $N, L, \lambda, \epsilon, \Phi=\{\phi_m\}_{m=1}^{M_\phi}, \Theta=\{\theta_m\}_{m=1}^{M_\theta}, \pi_0(\phi)$
    \State \textbf{Output}: $\pi(\phi)$
    \State \textbf{Initialization}:
    \State $i \gets 0$
    \State $R_{L} = E_{\pi_0(\phi)}\left\{ D_{c,N,L}(P_\phi \| Q_{\pi_0}) -  D_{c,N,L}(P_\phi \| \Theta) \right\}$
    \State $R_{U} = \max_{\phi}\left( D_{c,N,L}(P_\phi \| Q_{\pi_0}) -  D_{c,N,L}(P_\phi \| \Theta) \right)$
    \State \textbf{Loop}:
    \begin{algorithmic}
    \While {$R_{U} - R_{L} > \epsilon$} 
            \State $i \gets i+1$
            \State $\tilde{\pi}(\phi_j)_{i+1} = \pi_i(\phi_j) \cdot 
            e^{\lambda \left(D_{c,N,L}(P_{\phi_j} \| Q_{\pi_i}) - D_{c,N,L}(P_{\phi_j} \| \Theta) \right)}$
            \State $\pi(\phi_j)_{i+1} = \frac{\tilde{\pi}(\phi_j)_{i+1}}{ \sum_{j'=0}^{M_{\phi}} \tilde{\pi}(\phi_{j'})_{i+1}}$
            \State $R_{L} = E_{\pi_{i+1}(\phi)}\left\{ D_{c,N,L}(P_\phi \| Q_{\pi_{i+1}}) -  D_{c,N,L}(P_\phi \| \Theta) \right\}$
            \State $R_{U} = \max_{\phi}\left( D_{c,N,L}(P_\phi \| Q_{\pi_0}) -  D_{c,N,L}(P_\phi \| \Theta) \right)$ 
    \EndWhile
    \State \textbf{end}
    \State \textbf{Return}: $\pi(\phi)$
    \end{algorithmic}
    \end{algorithmic}
    \end{algorithm}
    
    \subsection{Supervised Batch Learning}
    In this section we turn to extend our main results to the setting of the supervised data setting. In this setting, for each data feature or side information, denoted by $x_N$, there is a labeled data sample, denoted by $y_N$. The problem statement is to assign a universal conditional distribution to the next data label $y_N$ given its appropriate data feature $x_N$ and the training set $x^{N-1}$ and $y^{N-1}$. We denote this universal distribution by $Q(y_N | x^N, y^{N-1})$.
    We assume that the data features are i.i.d, i.e., $P(x^N)=\Pi_{n=1}^NP(x_n)$, where $P(x)$ is a known distribution and the conditional probability between $y_N$ to $x_N$ is a Discrete Memoryless Channel (DMC) like, i.e., $P_\phi(y^N|x^N)=\Pi_{n=1}^NP_\phi(y_n|x_n)$ for any $P_\phi \in \Phi$. 
    
    The min-max regret of the above supervised batch learning problem under the misspecified setting is defined as follows:
    \begin{align}
        \begin{aligned}
            R_N^*(\Theta,\Phi) = \min_{Q} \max_{P_\theta \in \Theta} \max_{P_\phi \in \Phi} R_N\left(P_\phi,P_\theta,Q\right)
        \end{aligned}
    \end{align}
    where,
    \begin{align}
        \begin{aligned}
            R(P_\phi,P_\theta,Q) &= \sum_{x^N} \sum_{y^N} {P(x^N)}{P_{\phi}(y^N|x^N)} \log\left( \frac{P_{\theta}(y_N|x_N)}{Q(y_N|x^N,y^{N-1})} \right)
            \\& = \sum_{x^N} \sum_{y^N} {P(x^N)}{P_{\phi}(y^N|x^N)} \log\left( \frac{P_{\phi}(y_N|x_N)}{Q(y_N|x^N,y^{N-1})} \right) 
            \\&- D_{c}(P_\phi \| P_\theta).
        \end{aligned}
    \end{align}
    and
    \begin{align}
        \begin{aligned}
            D_{c}(P_\phi \| P_\theta) = \sum_{x_N}\sum_{y_N} P(x_N)P_\phi(y_N) \log\left( \frac{P_\phi(y_N | x_N)}{P_\theta(y_N | x_N)} \right).
        \end{aligned}
    \end{align}
    In addition, let us define the conditional divergence between a probability distribution $P_\phi \in \Phi$ to the set of hypotheses $\Theta$ by the following:
    \begin{align}
        \begin{aligned}
            D_{c}( P_\phi \| \Theta ) \equiv \min_{P_\theta \in \Theta} D_c\left( P_\phi \| P_\theta \right).
        \end{aligned}
    \end{align}

    Under the above definitions, the min-max regret of the misspecified supervised batch learning setting is given by the following Theorem:
    \begin{thm}
        The min-max regret of the problem defined above is given by:		
		\begin{align}									
			\begin{aligned}		
				\label{thmRegret}	
			R^*_N(\Theta,\Phi) = \max_{\pi({\phi})}  \left( I(Y_N;\Phi|X^N,Y^{N-1}) - E_{\pi(\phi)} \{  D_{c}({{P_{\phi}}\|\Theta}) \} \right)
		\end{aligned}
		\end{align}
		and the universal distribution for a given $\pi(\phi)$ is given by:
		\begin{align}
			\begin{aligned}
				Q_{\pi}({y_N|x^N,y^{N-1}}) = \frac{\int_{\phi}\pi(\phi)P_{\phi}(y^{N}|x^N)  \,d\phi}{\int_{\phi}\pi(\phi)P_{\phi}(y^{N-1}|x^{N-1})  \,d\phi}.
			\end{aligned}
		\end{align}
    \end{thm}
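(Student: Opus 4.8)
The plan is to mirror the proof of Theorem~\ref{ThmFirst}, carrying the features $x^N$ along as conditioning side information throughout. First I would rewrite the per-instance regret $R_N(P_\phi,P_\theta,Q)$ by adding and subtracting $\log P_\phi(y_N|x_N)$ inside the logarithm, which splits it as $D_c(P_\phi\|Q) - D_c(P_\phi\|P_\theta)$, where the first divergence is the supervised conditional divergence with the expectation taken jointly over $P(x^N)$ and $P_\phi(y^N|x^N)$, and $P_\theta(y_N|x_N)=P_\theta(y_N|x^N,y^{N-1})$ by the DMC assumption on $\Theta$. Since the first term does not involve $\Theta$, the inner maximization over $P_\theta\in\Theta$ converts the second term into $-D_c(P_\phi\|\Theta)$ by the definition of divergence to a set.

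Next I would pass to the mixture formulation, replacing $\max_{P_\phi\in\Phi}$ by $\max_{\pi(\phi)}$ of the $\pi$-averaged objective $R_N(\pi,Q)\equiv E_{\pi(\phi)}\{D_c(P_\phi\|Q)\}-E_{\pi(\phi)}\{D_c(P_\phi\|\Theta)\}$. Exactly as in Theorem~\ref{ThmFirst}, $R_N(\pi,Q)$ is convex in $Q$ (its $Q$-dependence is $\propto-\log Q$) and linear, hence concave, in $\pi$, so Sion's minimax theorem~\cite{Sion} lets me swap the order to $\max_{\pi}\min_Q R_N(\pi,Q)$. I would then minimize over $Q$ via a Lagrangian imposing $\sum_{y_N}Q(y_N|x^N,y^{N-1})=1$ for each fixed $(x^N,y^{N-1})$; zeroing the derivative gives $Q_\pi(y_N|x^N,y^{N-1})\propto P(x^N)\int\pi(\phi)P_\phi(y^N|x^N)\,d\phi$, and normalization together with $\sum_{y_N}P_\phi(y^N|x^N)=P_\phi(y^{N-1}|x^{N-1})$ produces the denominator $\int\pi(\phi)P_\phi(y^{N-1}|x^{N-1})\,d\phi$, which is the claimed $Q_\pi$.

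Finally I would substitute $Q_\pi$ back into the first term. Defining the joint law $P(x^N,y^N,\phi)\equiv\pi(\phi)P(x^N)P_\phi(y^N|x^N)$, one has $P_\phi(y_N|x_N)=P(Y_N|X^N,Y^{N-1},\Phi)$ by the memoryless structure and $Q_\pi(y_N|x^N,y^{N-1})=P(Y_N|X^N,Y^{N-1})$ after marginalizing $\phi$; hence $E_{\pi(\phi)}\{D_c(P_\phi\|Q_\pi)\}=I(Y_N;\Phi|X^N,Y^{N-1})$, and combining with the penalty term $-E_{\pi(\phi)}\{D_c(P_\phi\|\Theta)\}$ and the outer $\max_\pi$ yields the stated expression for $R_N^*(\Theta,\Phi)$.

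The main obstacle I anticipate is purely the bookkeeping with the feature variables: checking that the i.i.d.\ assumption on $x^N$ and the DMC form $P_\phi(y^N|x^N)=\prod_n P_\phi(y_n|x_n)$ make $P_\phi(y_N|x_N)$ coincide with the conditional law of $Y_N$ given $(X^N,Y^{N-1},\Phi)$, that the known factor $P(x^N)$ cancels in the Lagrange normalization, and that summing over $y_N$ genuinely removes the dependence on $x_N$. None of these steps is deep, but they are the only places the supervised setting differs from Theorem~\ref{ThmFirst}, so they are where care is needed.
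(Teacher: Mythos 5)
Your proposal is correct and follows essentially the same route as the paper's proof: the split of the regret into $D_c(P_\phi\|Q)-D_c(P_\phi\|\Theta)$, the passage to a mixture over $\pi(\phi)$ with a minimax swap, the Lagrangian minimization over $Q$ (with the $P(x^N)$ factor cancelling in the normalization), and the identification of the first term as $I(Y_N;\Phi|X^N,Y^{N-1})$ via the joint law $\pi(\phi)P(x^N)P_\phi(y^N|x^N)$ all match the paper's argument. The only cosmetic difference is that you invoke Sion's theorem explicitly, whereas the paper cites it only in the unsupervised Theorem \ref{ThmFirst} and reuses the argument implicitly here.
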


    \begin{proof}        
    \begin{align}
        \begin{aligned}
            R_N^*(\Theta,\Phi) &= \min_{Q} \max_{P_\theta \in \Theta} \max_{P_\phi \in \Phi} R(P_\phi,P_\theta,Q)             
            \\&= \min_{Q} \max_{P_\phi \in \Phi} \Bigg( \sum_{x^N} \sum_{y^N} {P(x^N)}{P_{\phi}(y^N|x^N)} \log\left( \frac{P_{\phi}(y_N|x_N)}{Q(y_N|x^N,y^{N-1})} \right)
            \\&- \min_{P_\theta \in \Theta}D_{c}(P_\phi \| P_\theta) \Bigg)            
            \\&\equiv \min_{Q}\max_{\pi(\phi)}R(\pi(\phi),Q)
        \end{aligned}
    \end{align}
    where,
    \begin{align}
        \begin{aligned}
            R(\pi(\phi),Q) &\equiv \int_{\phi}{\pi(\phi) \sum_{x^N} \sum_{y^N} {P(x^N)}{P_{\phi}(y^N|x^N)} \log\left( \frac{P_{\phi}(y_N|x_N)}{Q(y_N|x^N,y^{N-1})} \right) }d\phi
            \\& - \int_{\phi}{\pi(\phi) \min_{P_\theta \in \Theta}D_{c}(P_\phi \| P_\theta)}d\phi
            \\&= \int_{\phi}{\pi(\phi) \sum_{x^N} \sum_{y^N} {P(x^N)}{P_{\phi}(y^N|x^N)} \log\left( \frac{P_{\phi}(y_N|x_N)}{Q(y_N|x^N,y^{N-1})} \right) }d\phi 
            \\& - E_{\pi(\phi)}\left\{ \min_{P_\theta \in \Theta} D_{c}( P_\phi \| P_\theta ) \right\}.
        \end{aligned}
    \end{align}
    Using the min-max theorem we get:
    \begin{align}
        \begin{aligned}
            R_N^*(\Theta,\Phi) &= \min_{Q}\max_{\pi(\phi)}R(\pi(\phi),Q) 
            \\&= \max_{\pi(\phi)}\min_{Q}R(\pi(\phi),Q).
        \end{aligned}
    \end{align}    
    
    Defining the following Lagrangian:    
    \begin{align}
        \begin{aligned}
            L = R(\pi(\phi),Q) + \sum_{x^N}\sum_{y^{N-1}}\lambda_{x^N,y^{N-1}}\sum_{y_N}Q(y_N | x^N, y^{N-1})
        \end{aligned}
    \end{align}
    and zeroing the derivative of $L$ w.r.t $Q$ we get:    
    \begin{align}
        \begin{aligned}
            \frac{\partial L}{\partial Q} = -\frac{1}{Q(y_N | x^N, y^{N-1})} \int_{\phi}{\pi(\phi) \Pi_{n=1}^NP(x_n) \Pi_{n=1}^NP_{\phi}(y_n|x_n)}d\phi + \lambda_{x^N,y^{N-1}} = 0
        \end{aligned}
    \end{align}
    and to hold the constraint $\sum_{y_N}Q(y_N | x^N, y^{N-1}) = 1$ we get:
    \begin{align}
        \begin{aligned}
            \lambda_{x^N,y^{N-1}} = \int_{\phi}{\pi(\phi) \Pi_{n=1}^NP(x_n) \Pi_{n=1}^{N-1}P_{\phi}(y_n|x_n)}d\phi.
        \end{aligned}
    \end{align}    
    Combining all the above, gives us:
    \begin{align}
        \begin{aligned}
            {Q(y_N | x^N, y^{N-1})} &= \frac{\int_{\phi}{\pi(\phi) \Pi_{n=1}^NP_{\phi}(y_n|x_n)}d\phi}{\int_{\phi}{\pi(\phi) \Pi_{n=1}^{N-1}P_{\phi}(y_n|x_n)}d\phi}
            \\& = \frac{\int_{\phi}{\pi(\phi) P_{\phi}(y^N|x^N)}d\phi}{\int_{\phi}{\pi(\phi) P_{\phi}(y^{N-1}|y^{N-1})}d\phi}.
        \end{aligned}
    \end{align}    
    Let us define the joint probability distribution function of $\Phi$, $X^N$ and $Y^N$ by:
    \begin{align}
        \begin{aligned}
            P(\phi,x^N,y^N) = \pi(\phi)\Pi_{n=1}^NP(x_n)\Pi_{n=1}^NP_{\phi}(y_n|x_n)     
        \end{aligned}
    \end{align}    
    then it can be verified that the following holds:
    \begin{align}
        \begin{aligned}
            &P(y_N|x^N,y^{N-1}) = Q(y_N | x^N, y^{N-1})
            \\&
            P(y_N|\phi,x^N,y^{N-1}) = P_\phi(y_N|x_N)
        \end{aligned}
    \end{align}
    and therefore we get:
    \begin{align}
        \begin{aligned}
            R_N^*(\Theta,\Phi) &= \max_{\pi(\phi)} \Bigg( \int_{\phi}{\pi(\phi) \sum_{x^N} \sum_{y^N} P(\phi,x^N,y^N) \log\left( \frac{P(y_N|\phi,x^N,y^{N-1})}{P(y_N|x^N,y^{N-1})} \right) }d\phi 
            \\&- E_{\pi(\phi)}\left\{ \min_{P_\theta \in \Theta} D_{c}( P_\phi |\ P_\theta ) \right\} \Bigg)
            \\& = \max_{\pi(\phi)} \left( I\left( Y_N;\Phi | X^N, Y^{N-1} \right) 
            - E_{\pi(\phi)}\left\{ D_{c}( P_\phi \| \Theta ) \right\} \right).
        \end{aligned}
    \end{align}    
    \end{proof}    

    Note that the general case, where the data features distribution is unknown, and in addition, our conjecture that a similar result to Theorem \ref{Thm2} also holds in this supervised setting is still under investigation.
    
    \section{Conclusions and Future Research Directions}\label{Conclusions and Future Research Directions}
    In this paper we discussed the universal batch learning problem under the misspecification setting with log-loss. We introduced the universal probability distribution for the next data       outcome given the batch training samples as a mixture over the set of data generating distribution set. Moreover, we derived closed form expression for the min-max regret and presented it by information theoretical tool, as a constrained version of the conditional capacity between the data and the data generating distribution set. Furthermore, we derived tight bounds for the min-max regret, and showed that it approximately performs as in the problem of batch learning under the stochastic setting. Surprisingly, this means that the complexity of the problem is governed by the richness of the hypotheses set and not by the larger set of data generating distributions. We demonstrated this observation in the example where $\Phi$ is the set of all Bernoulli distributions and showed that its regret equals approximately to the conditional capacity of only a subset of Bernoulli distributions defined by the set of hypotheses $\Theta$.
    In addition, we developed an extension to the Arimoto-Blahut algorithm for numerical evaluation of the capacity-like achieving prior distribution and the regret. We applied this algorithm over few interesting Bernoulli sets of $\Phi$ and $\Theta$.
    Using the techniques and theoretical tools we developed during the analysis of the unsupervised batch learning setting we extended our analysis to the combined batch and online setting and to the supervised batch learning under the misspecification setting.

    This work raises several topics for further research. In this paper we concentrated on the unsupervised universal batch learning under the misspecification setting problem. 
    While the unsupervised online learning under the misspecfication setting has been analyzed in \cite{UniversalPrediction},\cite{RobustInference} the corresponding supervised setting is still missing and needs to be analyzed. In addition, the result of Theorem \ref{Thm2} does not guarantee that always $C_{c,N}(\Theta_{\epsilon_N}) \to C_{c,N}(\Theta)$. Further analysis should be done to find the necessary and sufficient conditions to meet this.

    Finally, this work and these additional challenges combined with many other referenced works here are an additional part of the quest to establish information theory approach in the theoretical framework of statistical machine learning.

\end{document}